\theoremstyle{plain}
\newtheorem{theorem}{Theorem}[section]
\theoremstyle{definition}
\theoremstyle{remark}
\icmltitlerunning{Robust Causal Discovery in Real-World Time Series with Power-Laws}
\definecolor{ForestGreen}{RGB}{34,139,34}
\newcommand{\mt}[1]{{\color{blue}{\textbf{[Matteo:} #1\textbf{]}}}}
\newcommand{\gm}[1]{{\color{teal}{\textbf{[Giuseppe:} #1\textbf{]}}}}
\newcommand{\va}[1]{{\color{purple}{\textbf{[V:} #1\textbf{]}}}}
\newcommand{\ag}[1]{{\color{cyan}{\textbf{[Aldo:} #1\textbf{]}}}}
\DeclareRobustCommand{\algoName}{\textbf{PLaCy}\xspace}
\begin{document}

\twocolumn[
  \icmltitle{Robust Causal Discovery in Real-World Time Series with Power-Laws}



\begin{icmlauthorlist}
  \icmlauthor{Matteo Tusoni\textsuperscript{$\dagger$,$\ddagger$}}{sap}
  \icmlauthor{Giuseppe Masi}{sap}
  \icmlauthor{Andrea Coletta\textsuperscript{$\ddagger$}}{bank}
  \icmlauthor{Aldo Glielmo\textsuperscript{$\ddagger$}}{bank}
  \icmlauthor{Viviana Arrigoni}{sap}
  \icmlauthor{Novella Bartolini}{sap}
\end{icmlauthorlist}

\icmlaffiliation{sap}{Department of Computer Science, Sapienza University of Rome, Rome, Italy}
\icmlaffiliation{bank}{Banca d'Italia, Rome, Italy}

\icmlcorrespondingauthor{Matteo Tusoni}{tusoni@di.uniroma1.it}

\icmlkeywords{Machine Learning, ICML}

\vskip 0.3in
]

\printAffiliationsAndNotice{%
\textsuperscript{$\dagger$} This research was conducted as part of an internship at Banca d'Italia. \textsuperscript{$\ddagger$} The views expressed in this article are those of the authors and do not necessarily represent the views of Banca d’Italia or the Eurosystem.}



\begin{abstract}
Exploring causal relationships in stochastic time series is a challenging yet crucial task with a vast range of applications, including finance, economics, neuroscience, and climate science. 
Many algorithms for Causal Discovery (CD) have been proposed; however, they often exhibit a high sensitivity to noise, resulting in spurious causal inferences in real data.
In this paper, we observe that the frequency spectra of many real-world time series follow a power-law distribution, notably due to an inherent self-organizing behavior.
Leveraging this insight, we build a robust CD method based on the extraction of power-law spectral features that amplify genuine causal signals. 
Our method consistently outperforms state-of-the-art alternatives on both synthetic benchmarks and real-world datasets with known causal structures, demonstrating its robustness and practical relevance.
\end{abstract}

\section{Introduction}

Causal Discovery (CD) from stochastic time series aims to identify causal relationships among time-evolving variables purely from observational data. 
CD algorithms 
represent a domain-agnostic alternative to analytical modeling, which can be impractical in many scientific domains characterized by complex dynamics. 
The resulting causal model is typically represented with a \emph{causal graph}, where nodes are variables, and directed edges reflect asymmetric causal dependencies between them. 
This methodology has been successfully employed on a vast range of fields, including 
climate science~\cite{mosedale2006granger,nowack2020causal}, neuroscience \cite{bressler2011wiener,seth2015granger}, finance \cite{billio2012econometric,diebold2014network}, and, more recently, 
generative AI ~\cite{kocaoglu2017causalgan,yang2021causalvae,jiao2024causal}. 
Nevertheless, inferring causal relationships in time series is particularly challenging due to factors such as noise and non-stationarity (i.e., time-varying dynamics), which can obscure the underlying causal structure and reduce the robustness of causal discovery algorithms.
Classical CD methods, most notably Granger Causality and its extensions, rely on restrictive assumptions such as noise stationarity and the existence of a single characteristic scale to define vector autoregressive (VAR) models appropriately.
Unfortunately, these assumptions are frequently violated, as real-world systems are typically non-equilibrium, history-dependent, and often display scale-free temporal correlations and power-law frequency spectra \cite{bak2013nature}. 
In such contexts, conventional CD algorithms can easily incur errors, and detect spurious relationships or fail to detect true interactions.
To address these shortcomings, we introduce \algoName (\textbf{P}ower‑\textbf{La}w \textbf{C}ausal discover\textbf{y}), which is specifically designed to leverage the scale‑free properties commonly observed in real-world time series.
Instead of comparing variables at individual time points, it fits a power‑law model to the frequency spectrum of each process and tracks the evolution of the fitted spectral exponents and amplitudes.
In this way, \algoName 
isolates structural causal changes that propagate from one variable to another by filtering out non-stationary and nonlinear external influences, bearing the absence of a characteristic scale.
Classical Granger‑type hypothesis tests are then applied to the trajectories of power-law spectral exponents and amplitudes, rather than to the raw signals, preserving the statistical power of established testing theory.
By running extensive experiments on synthetic benchmarks with controlled nonlinear and non‑stationary noise, or scale-free characteristics, as well as on two real‑world data sets, we demonstrate that {\algoName} outperforms state‑of‑the‑art CD methods, particularly in regimes where the non-equilibrium, nonlinear, or scale-free properties of the time series are more pronounced.

%
The main contribution of this paper is the following:

\begin{itemize}[leftmargin=7pt]
    \item We propose \algoName, a novel framework that leverages spectral trends for robust causal discovery in time-series with power-law frequency distributions.
    \item We theoretically demonstrate that the frequency-domain transformation used in \algoName preserves the underlying causal graph structure, guaranteeing results consistent with the time-domain graph.
    \item We empirically show that \algoName provides more robust and accurate estimations, validated through extensive experiments on both synthetic and real-world datasets.

\end{itemize}

\section{Preliminaries}
\subsection{Causal Discovery} 
Causal Discovery is the task of identifying the underlying structure of cause-and-effect relationships among the components of a multivariate system. 
Given a collection of observed multivariate time series, 
the goal is to infer a directed graph that encodes which variables influence others in a causal sense.
Formally, given a time-series $\mathbf{x} \in \mathbb{R}^{L \times d}$ , the task is to determine a directed graph ${G} = ({V}, {E})$, where ${V} = \{1, 2, \dots, d\}$
represents the variables of the system, and ${E} \subseteq {V} \times {V}$ is the set of directed edges. 
A directed edge $(i, j)$ exists if and only if $\mathbf{x}_i$ is inferred to be a cause of $\mathbf{x}_j$. 
The goal of our approach is to derive the causal graph representing the causal relationships among the variables. 

\subsection{Granger Causality}\label{sec:granger_causality}

Granger causality holds when past values of one time series provide statistically significant predictive information for another.
In particular, we say that $\mathbf{x}_i$ \textit{Granger-causes} $\mathbf{x}_j$ if the past values of $\mathbf{x}_i$ are useful to predict $\mathbf{x}_j$, given the past of all other time series. 
In time series data, Granger causality is typically studied using a multivariate vector autoregressive model (VAR)~\cite{lutkepohl2005new}:
\begin{equation*}
\mathbf{x}(t) = \sum\nolimits_{\tau=1}^{T} \mathbf{A}_\tau \mathbf{x}(t-\tau) + \boldsymbol{\varepsilon}_t,
\end{equation*}
where $\mathbf{x}(t)$ is the multivariate time-series at time $t$, with each component defined as a linear combination of the past $T$ values of all variables. 
Granger causal analysis involves fitting a VAR model and testing the statistical significance of the autoregressive coefficient matrices ${\mathbf{A}_\tau}$, typically using a Wald test~\cite{fahrmeir2013regression}.
This requires comparing two models: the unrestricted model, which includes lagged terms of both $\mathbf{x}_i$ and $\mathbf{x}_j$, and the restricted model, which excludes the lagged terms of $\mathbf{x}_i$ from the prediction of $\mathbf{x}_j$.
The null hypothesis states that all coefficients related to the lagged $\mathbf{x}_i$ terms are zero. 
Failing to reject this null hypothesis implies that $\mathbf{x}_i$ does not Granger-cause $\mathbf{x}_j$.
Notice that the Granger causality definition does not explicitly account for the time elapsed between cause and effect, since it jointly tests all specified lags together. 
Similarly, in our work, we focus on identifying the existence of causal relationships, regardless of the specific time lag between cause and effect. 




\subsection{Power-laws in the real-world}
\label{sec:powerlaws}
Over the past six decades, extensive empirical evidence has shown that power-law spectra of the form $S(f) \propto f^{-2\lambda}$, with $\lambda > 0$, are ubiquitous in real-world time series.
%
Classic examples can be found in finance~\cite{bonanno2000dynamics,di2005long}, climate science~\citep{huybers2006links,fredriksen2016spectral} or neuroscience~\citep{neuronal2013,he2014scale,gyurkovics2022stimulus,medel2023complexity}.
%
Power-law spectra frequently arise in systems composed of many interacting units, such as traders in a market or nodes in communication networks, that \emph{self-organize} into structured behavior without any external regulator/coordinator~\citep{bak1987self,bak1991self}.
Specifically, self‑organizing systems often exhibit scale invariance~\cite{proekt2012scale}, precisely due to the absence of any external coordinator enforcing a characteristic scale.
A stochastic process $\{x(t)\}$, is \textit{scale invariant} if $\forall a \in \mathbb{R}^+$, the rescaled process $\{x(at)\}$ is statistically equivalent to $\{a^H x(t)\}$, for some $H\in \mathbb{R}^+$.
This property implies that any magnified fragment of a scale-invariant stochastic process looks identical to the original series and, for this reason, scale invariance is sometimes referred to as self-similarity and is very related to the geometric concept of a fractal \cite{bak2013nature}.
It is also known that, under very loose assumptions, scale-invariant stochastic processes are also \emph{scale-free}, meaning that they exhibit power-law correlations, and power-law distributed frequency spectra with exponent $\lambda = H-1/2$ \citep{flandrin1989spectrum}.
Given the ubiquity of power-law distributed frequency spectra in the real-world, this structural regularity can be leveraged to improve the extraction of causal signals from time series, reducing spurious temporal dependencies. 

\section{Proposed Methodology}
\label{sec:methodology}

\begin{figure}[h!]
    \centering
    \includegraphics[scale=.21]{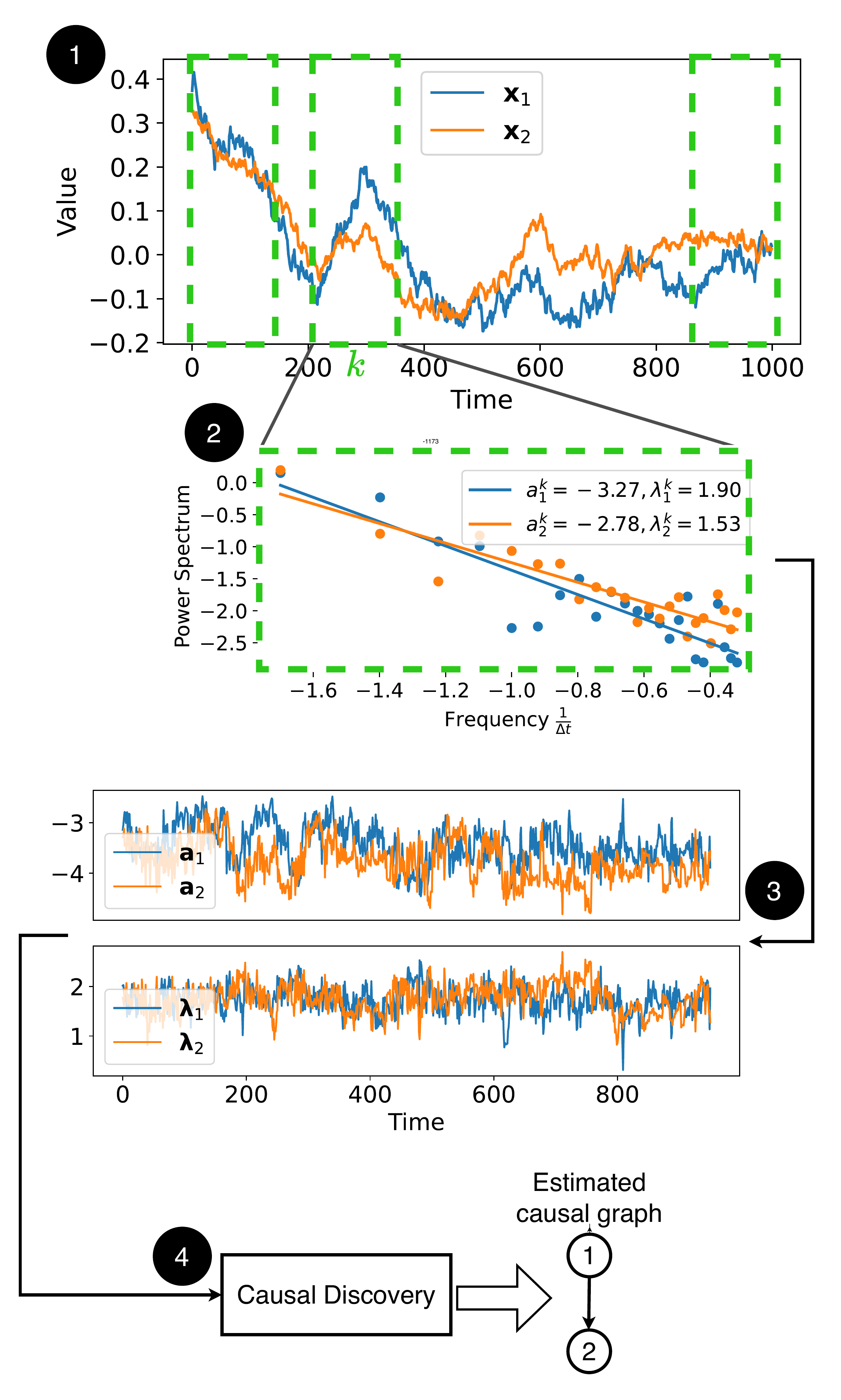}
    \caption{\textbf{Schematic illustration of the proposed methodology}. 
    The original time series, here $\mathbf{x}_1$ and $\mathbf{x}_2$, are  segmented into overlapping windows (\textbf{step }\circledtext*[height=1.8ex,charshrink=0.65]{1}). 
    Then, for each window $k$, the amplitudes ($a^k_1$, $a^k_2$) and the exponents ($\lambda^k_1$, $\lambda^k_2$) of the power-law distributed spectra are computed (\textbf{step }\circledtext*[height=1.8ex,charshrink=0.65]{2}).
    These give rise to new, multi-dimensional, time series: ($\mathbf{a}_1$,$\boldsymbol{\lambda}_1$) for $\mathbf{x}_1$ and  ($\mathbf{a}_2$,$\boldsymbol{\lambda}_2$) for $\mathbf{x}_2$ respectively (\textbf{step }\circledtext*[height=1.8ex,charshrink=0.65]{3}).  
    Finally, multivariate Granger causality tests are performed on these new series, and the causal graph is constructed (\textbf{step }\circledtext*[height=1.8ex,charshrink=0.65]{4}).}
    \label{fig:methodology}
\end{figure}
%
A well-established approach in signal processing involves analyzing the frequency content of a signal via its spectral representation. 
To this end, we employ the Discrete Fourier Transform (DFT). 
Given a real-valued time series $x(t)$ of length $L$, the DFT is defined as:
\allowdisplaybreaks
\begin{equation}
\phi(k) = \sum\nolimits_{t=0}^{L-1} x(t) \, e^{-i 2\pi \frac{k}{L} t}, \quad k \in \{0, \dots, L-1\},
\label{eq:dft}
\end{equation}
where $\phi(k) \in \mathbb{C}$ denotes the complex-valued coefficient corresponding to the $k$-th discrete frequency. 
The associated normalized frequency is given by $f_k = \frac{k}{L}$.
The magnitude of each Fourier coefficient quantifies the contribution of the corresponding frequency component to the overall signal. 
We therefore denote the \emph{spectral amplitude} as $A(f_k) = |\phi(k)|$.\\
As discussed previously, many natural and social systems exhibit long-range dependencies and scale-free behavior in their frequency content, often associated with self-organized phenomena.
A defining characteristic of these systems is the power-law decay of their power spectral amplitude, typically modeled as $A(f) = {\rm e}^a\cdot f^{-\lambda}$, where ${\rm e}^a$ is a scaling constant and $\lambda > 0$ is the \emph{spectral exponent}. 
The exponent $\lambda$ is tightly linked to structural features of the process, such as its autocorrelation.
Importantly, the spectral parameters $a$ and $\lambda$ may vary over time due to exogenous perturbations or endogenous interactions. 
These variations provide an opportunity to study causal structures through their temporal dynamics. 
Instead of analyzing the raw time series directly, we propose to monitor the evolution of $(\mathbf{a}, \boldsymbol{\lambda})$ as informative summaries of the underlying processes.
To achieve this, we segment each time series into overlapping windows (\textbf{step} \circledtext*[height=1.8ex,charshrink=0.65]{1} in Figure~\ref{fig:methodology}) and compute the local spectral parameters within each window. 
This is done by estimating the slope and intercept of the spectrum in log-log space:
\begin{equation}
\log A(f) = a - \lambda \log f.
\label{eq:spectral_parameters}
\end{equation}
The linear form permits efficient estimation via ordinary least squares, yielding one value of $a$ and $\lambda$ per window (\textbf{step }\circledtext*[height=1.8ex,charshrink=0.65]{2} in Figure~\ref{fig:methodology}). 
Repeating this procedure across the entire series results in two new time series per original signal: $\mathbf{a}$ and $\boldsymbol{\lambda}$ (\textbf{step }\circledtext*[height=1.8ex,charshrink=0.65]{3} in Figure~\ref{fig:methodology}).
To capture the spectral behavior exhibited within each analysis window, we apply overlapping windows. 
This design is critical to preserve the detection of short-lived or temporally localized causal effects. 
To maximize sensitivity, the stride between consecutive windows can be fixed at 1, so that each new window shifts by a single time step. 
This dense sampling guarantees that even subtle or rapid changes in the spectral parameters $(a, \lambda)$ are preserved in the constructed feature time series.
The window length is selected adaptively to balance two competing requirements: it must be short enough to capture temporal variations in the spectral parameters, yet long enough to ensure a reliable estimation of the power-law behavior. 
To meet this trade-off, we evaluate the $p$-value of a Wald test on the linear fit in log-log space for each candidate window size, and select the shortest window for which the fit achieves a statistical significance threshold of $p=0.05$.
Further details of this procedure are provided in the Appendix. 
Once the feature series $(\mathbf{a}, \boldsymbol{\lambda})$ are built for a couple of original signals, we perform multivariate Granger causality tests, as described in~\Cref{sec:granger_causality} (\textbf{step }\circledtext*[height=1.8ex,charshrink=0.65]{4} in Figure~\ref{fig:methodology}).
Since the causal information is primarily encoded in the $\lambda$ parameter, the Granger test is applied to assess whether $(\boldsymbol{\lambda}_i, \mathbf{a}_i)$ of the candidate causing series $\mathbf{x}_i$ provide statistically significant information about the dynamics of $\boldsymbol{\lambda}_j$ in the target series $\mathbf{x}_j$ (see Appendix for further details).
In the end, a causal edge is retained in the resulting graph if the corresponding $p$-value falls below the fixed threshold of $0.05$. 
This procedure is repeated across all variable pairs to reconstruct the full causal graph, as detailed in Algorithm~\ref{alg:placy}. 
\


{
\scriptsize
\RestyleAlgo{ruled}
\SetInd{1em}{0.4em}
\SetNlSty{textbf}{\scriptsize}{.}
\SetAlgoNlRelativeSize{0}          
\small                             
\begin{algorithm}[t]
\caption{\textsc{\algoName}}\label{alg:placy}
\KwIn{Time series $\mathbf{x}=(\mathbf{x}_1,\ldots, \mathbf{x}_d)$ of length $L$; stride $s$; window size $l$.}
\KwOut{Causal Graph $\mathcal{G}$.}

Divide each $\mathbf{x}_i$ into $\lfloor \frac{L-l}{s}\rfloor+1$ sliding windows, namely $\mathbf{w}_i^k$, of size $l$ with stride $s$. \label{line:sliding_window}\\

\For{\emph{each }$i \in \{1,\ldots,d\}$}{
    \For{\emph{each }$k\in \{0, \ldots, \left\lfloor \frac{L - l}{s} \right\rfloor\}$}{
        Apply the DFT (\Cref{eq:dft}) to $\mathbf{w}_i^k$ to get $\boldsymbol{\phi}_i^k$.\\
        Obtain $(a^k_i,\lambda^k_i)$ by using the fit in~\Cref{eq:spectral_parameters} on $\boldsymbol{\phi}_i^k$.
    }
    Concatenate $(a^k_i, \lambda^k_i)$ over $k$ to obtain time series $(\mathbf{a}_i, \boldsymbol{\lambda}_i)$.\\
}

\For{\emph{each }$i,j\in \{1,\ldots,d\}$ \emph{such that} $i\neq j$}{
    $\mathcal{G}_{i,j} \gets$ Granger Causality test~\ref{sec:granger_causality} 
    with $(\mathbf{a}_i, \boldsymbol{\lambda}_i)$ as causing series 
    and $\boldsymbol{\lambda}_j$ as caused series.
}

Optionally adjust the resulting $p$-values using FDR control.\footnotemark 

\Return $\mathcal{G}$.
\end{algorithm}
}

\subsection{Invariance of the Causal Graph under Spectral Feature Mapping}
Unlike conventional Granger methods, which analyze lagged relationships in the original signal space, our approach infers causality from the coordinated evolution of spectral properties.

Moreover, the spectral fitting acts as a natural denoising step, improving robustness to non-Gaussian fluctuations and high-frequency noise.
In the following Theorem~\ref{thm:spectral-invariance}, we discuss the correctness of this approach by showing that the causal graph of a stochastic process is invariant under the spectral transformation applied in Algorithm~\ref{alg:placy}, which preserves the causal semantics of the original process. 
\vspace{+.3cm}

\footnotetext{For the main experimental results presented in this paper, standard FDR~\cite{BenjaminiFDR} correction is not applied, in order to ensure comparability with the other algorithms and because our analysis focuses on individual link performance. Results with FDR correction are reported in Appendix~\ref{app:fdr}.}

\begin{theorem}[Preservation of Linear Causal Graphs under Spectral Transformations]
\label{thm:spectral-invariance}
Let $x$ be a multivariate time series generated by a linear structural causal process with ground-truth causal graph $G^*$. Let $T$ be the spectral transformation in Algorithm~1, which extracts for each component $x_i$ a sequence of time-evolving spectral features $(a_i,\lambda_i)$.

Suppose that:
\begin{itemize}
    \item within each analysis window, each component admits a local power-law spectral approximation of the form $A(f)=e^a \cdot f^{-\lambda}$;
    \item the process is weakly stationary within each analysis window;
    \item the underlying causal mechanisms are linear;
    \item the noise is additive and does not dominate the signal;
    \item causal dependencies are identifiable from amplitude-spectral dynamics;
    \item the transformed feature process satisfies the standard assumptions required for valid VAR-based Granger causal inference;
\end{itemize}

Then causal discovery performed on the transformed feature sequence $(a,\lambda)$ recovers the same causal graph $G^*$ as causal discovery performed in the original time domain.
\end{theorem}

\begin{proof}[Proof sketch]
The proof proceeds in two steps. 
First, we show that, under the stated assumptions, the feature series $(\mathbf{a}, \boldsymbol{\lambda})$ satisfies the classical conditions required for valid inference in vector autoregressive (VAR) models, as established in~\cite{lutkepohl2005new}. 
Second, we show that, for amplitude-expressive causal mechanisms, the mapping $\mathcal{T}$ preserves the causal dependencies encoded in the ground-truth graph $\mathcal{G}^*$. 
Combining these two results, we conclude that applying Granger causality analysis to the transformed sequence $(\mathbf{a}, \boldsymbol{\lambda})$ recovers the same causal graph $\mathcal{G}^*$ as time-domain analysis.
The complete formal proof of Theorem~\ref{thm:spectral-invariance} is provided in the Appendix. 
\end{proof}

%



\section{Related Work}
Causal discovery from observational time series has been extensively studied~\cite{gong2023causal, assaad2022survey}, leading to a broad spectrum of methods, from classical statistical tests to more advanced machine learning and spectral approaches.
We review many of them here, highlighting in \textbf{bold} the ones we compare against in this work. Moving beyond \textbf{Granger Causality} described in~\Cref{sec:granger_causality}, constraint-based methods have been developed and adapted for the temporal domain. 
Notably, the PC (Peter–Clark) algorithm~\cite{spirtes2000causation} (and its extension, FCI~\cite{strobl2018fast}) serves as the foundation for several approaches. 
These utilize conditional independence tests to infer graphical causal structures while accounting for temporal ordering. 
Building on these, the \textbf{PCMCI} algorithm~\cite{runge2019detecting} enhances causal discovery in time series by combining the PC methodology with the Momentary Conditional Independence (MCI) test, which rigorously controls for autocorrelation and indirect associations.
This algorithm was recently extended with \textbf{PCMCI$_{\boldsymbol{\Omega}}$}~\cite{gao2023causal} to the case of semi-stationary structural causal models.
Optimization-based and deep-learning approaches have further broadened the field.
\textbf{DYNOTEARS}~\cite{pamfil2020dynotears} casts causal discovery as a continuous optimization problem subject to acyclicity constraints, preserving efficiency in handling high-dimensional data. 
\textbf{Rhino}~\cite{gong2022rhinodeepcausaltemporal} represents an innovative deep learning-based approach where the CD task is addressed in scenarios where the noise distributions may depend on historical information.
On the one hand, the use of neural networks allows Rhino to handle history-dependent and non-stationary noise; on the other hand, this advantage comes at the cost of 
significant computational overhead during training.
Other techniques, such as Convergent Cross Mapping (CCM)~\cite{sugihara2012detecting}, although robust in theory, exhibit significant performance degradation in noisy settings. 
\\
Recent efforts have been made to enhance the noise-resilience of existing algorithms.
\textbf{CCM-Filtering} (\textbf{CCM})~\cite{zhang2024enhancing} improves the performance of CCM
by simply pre-processing the time series with an averaging filter.
\textbf{RCV-VarLiNGAM} (\textbf{RCV})~\cite{yu2024robusttimeseriescausal} integrates the $K$-fold cross-validation technique with the VarLiNGAM method~\cite{hyvarinen2010estimation}, addressing the challenges of a lack of noise robustness encountered in the standard method.

Previous works have also studied causal discovery in the frequency domain.
\textbf{Geweke}'s seminal work~\cite{Geweke1982} extended Granger causality by decomposing directional influence across frequencies, revealing dynamic interdependencies often hidden in time-domain analyses.
Subsequent studies applied this methodology to diverse domains, including economic cycles and oscillatory phenomena~\cite{BreitungCandelon2006}, network and finance~\cite{WANG2021101662}, commodity markets~\cite{ashley2008frequency}, and market volatility~\cite{MAITRA201774}.
Among frequency-domain approaches, the \textbf{Directed Transfer Function (DTF)}~\cite{KaminskiBlinowska1991DTF,Korzeniewska2003mDTF} quantifies directional interactions as a function of frequency through the multivariate VAR transfer function.
A \textbf{nonparametric spectral Granger (GewekeNP)} variant computes Geweke’s frequency-domain causality measure directly from the empirically estimated power spectra, instead of the canonical VAR-based derivation~\cite{Welch1967,Theiler1992Surrogates}.
We also implement the \textbf{BCGeweke} system used by Wang et al.~\cite{WANG2021101662}, which combines Geweke’s frequency decomposition with the Breitung–Candelon~\cite{BreitungCandelon2006} band-constrained statistical testing framework to detect directional dependencies within specific frequency bands.


Despite these advancements, many current methods remain vulnerable to noise, spurious dependencies, and deviations from Gaussianity. 
Motivated by these limitations, this paper proposes a novel frequency-domain strategy designed explicitly for robust causal discovery within stochastic power-law processes. 
Our approach inherently leverages the frequency-dependent structure of power-law processes, enhancing resilience to nonlinear, complex, noisy signals.


\section{Experiments}
\label{sec:experiments}
%

Unless otherwise stated, our method employs a sliding window of size  $l=50$, selected through the $p$-value procedure outlined in ~\Cref{sec:methodology}, a stride $s=1$ (refer to~\Cref{alg:placy}), and 10 lagged values. 
Quantitative results are averaged over $100$ runs for all methods, except for Rhino, which is averaged over only $10$ random seeds due to the high computational cost of neural network training.

We consider both synthetic and real-world datasets to rigorously evaluate our approach in terms of 
its robustness to noise and spurious associations. 
In particular, we create four synthetic datasets with increasing complexity, and we consider two real-world benchmark datasets with known causal graphs. 
Some prior datasets were excluded due to the absence of ground-truth causal graph or insufficient time series length for spectral estimation (see \Cref{sec:limitations}).

The code for all experiments is 
publicly available 
\footnote{\url{https://github.com/matteotusoni/PLACy}}.

\paragraph{Metrics}
We evaluate the performance of the algorithms based on their ability to accurately identify causal relationships among variables. 
By comparing the edges of the predicted causal graph with those of the ground-truth graph, we compute the following metrics: 
\textit{F1-score} (F1) that measures the performance of algorithms in correctly identifying causal relationships; and 
\textit{True Negative Rate} (TNR) to evaluate the robustness of the algorithms to noise and spurious associations, by measuring its ability to correctly identify the absence of causal links. 
The latter metric is particularly insightful, as it evaluates a method's ability to exclude erroneous relationships in the generated causal graphs, an aspect not directly captured by the F1-score.  Notice that our main experiments report uncorrected pairwise significance tests to maintain comparability with prior work. However, False Discovery Rate (FDR)~\cite{BenjaminiFDR} correction can be readily applied; supplementary results in Appendix~\ref{app:fdr} confirm that both method rankings and relative performance remain unchanged under this correction.

\subsection{Synthetic Scenarios}
\label{sec:SyntheticScenarios}
%


\paragraph{Data Generation}
To generate complex benchmark datasets, we use the well-known Ornstein-Uhlenbeck (OU) processes, originally introduced in statistical physics to describe the velocity of a Brownian particle under friction \cite{uhlenbeck1930theory}. 
These stochastic processes are widely used to model systems exhibiting mean-reverting behavior and power-law spectral characteristics. 
For example, they have been applied to capture the complexity of financial data~\cite{maller2009ornstein}. 
%
%
In the frequency domain, OU processes exhibit a characteristic power-law decay, reflecting the behavior of various natural systems that our approach seeks to address.
We simulate the baseline dynamics of each time series using a generalized OU process, defined as follows:
\vspace{-.5cm}

\footnotesize
\begin{align}
\label{eq:ouprocess}
x(t{+}\Delta t) &= x(t) + \frac{\Delta t}{\tau_c}\big(\mu - x(t)\big) \\
&\quad + \Big( \sigma_b \epsilon_b(t) + \sigma_g^a \epsilon_g^a(t) + \sigma_g^m \epsilon_g^m(t)\cdot x(t) \Big)\sqrt{\Delta t},\nonumber
\end{align}

\normalsize
where $\Delta t=0.01$ is the time step, $\tau_c=0.5$ denotes the timescale of mean reversion, and $\mu=1$ is the long-term mean. 
$\forall t, \, \epsilon_b(t)$, $\epsilon_g^a(t)$, and $\epsilon_g^m(t)$ represent the noise terms modeled as independent stochastic variables: the first is Brownian noise, while the latter two are standard Gaussian white noise processes.
Specifically, $\epsilon_g^a(t)$ is an additive noise component, whereas $\epsilon_g^m(t)$ acts as a multiplicative noise term that induces non-stationarity as its impact scales with the process value.
The parameters $\sigma_b$, $\sigma_g^a$, and $\sigma_g^m$  represent different sources of noise volatility.
This formulation enables the system to capture both additive and multiplicative stochastic effects, which are common in real-world dynamic systems.
To represent causal relationships between different time series, we construct a Directed Acyclic Graph (DAG) $\mathcal{G}$, represented by an upper triangular adjacency matrix ($M \in \{0, 1\}^{N \times N}$), which enforces a unidirectional flow of causality and prevents cycles. 
Each entry in the matrix is randomly set to $1$ with a probability of $0.3$, indicating a causal influence from one series to another.

Finally, causal dependencies are introduced by applying the generated ground truth causal matrix to the time series. 
In particular, if $M_{i,j} = 1$, indicating that series $i$ influences series $j$, then
\allowdisplaybreaks
\begin{equation*}
\forall t, \, x_j(t) \gets x_j(t) + C \cdot x_i(t - \tau),
\end{equation*}
where $C$ represents the \emph{causal strength}, and $\tau=5$ is the number of time-steps between the cause and the effect.
This ensures that the current value of the influenced series incorporates a lagged contribution from the influencing series.
Finally, we scale all time series to their original range to remove any unintended amplifications or distortions during the causal injection.


\paragraph{Datasets}
Following the generation process described in the previous paragraph, 
we define four representative scenarios to evaluate the robustness of our method under different dynamic conditions according to~\Cref{eq:ouprocess}: \textbf{(1)} ${\rm{OU}}(\sigma_g^m = 0)$ represents an OU process with no noise component proportional to the process itself; \textbf{(2)} ${\rm{OU}}(\sigma_g^m > 0)$ 
includes a Gaussian noise term that is proportional to the current value of the process. 
Both processes are initialized in equilibrium conditions, with the first time step $t=0$  set to $1$.
Non-equilibrium and phase-transitioning systems display complex behaviors, as observed in several domains (e.g., in financial markets~\cite{ang2012regime}). 
Therefore, we introduce a transition phase by initializing the process at $100$. 
By  extending the previous two scenarios, we obtain \textbf{(3)} $\widehat{\rm{OU}}(\sigma_g^m = 0)$ and \textbf{(4)} $\widehat{\rm{OU}}(\sigma_g^m > 0)$. 
For each dataset, we generate different scenarios with $N=5$ and $N=10$ variables (i.e., time series), each of length $L=5000$ time-steps.

\paragraph{Results}
\Cref{fig:syn_res} reports the F1-score and the TNR obtained on synthetic datasets with $N=5$ variables, causal strength $C=0.5$ and $\sigma_g^a=0$.

\begin{figure}[h!]
    \centering
    \begin{subfigure}[b]{0.47\textwidth}
            \hspace{.35cm}\includegraphics[width=\textwidth]{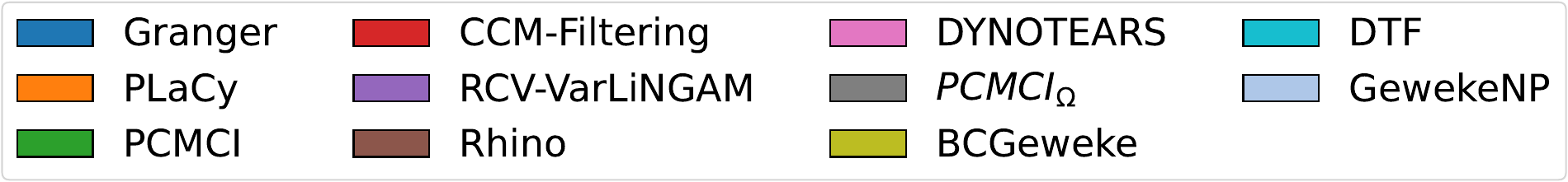}
    \end{subfigure}
    \begin{subfigure}[b]{0.49\textwidth}
        \centering
        \includegraphics[width=\textwidth]{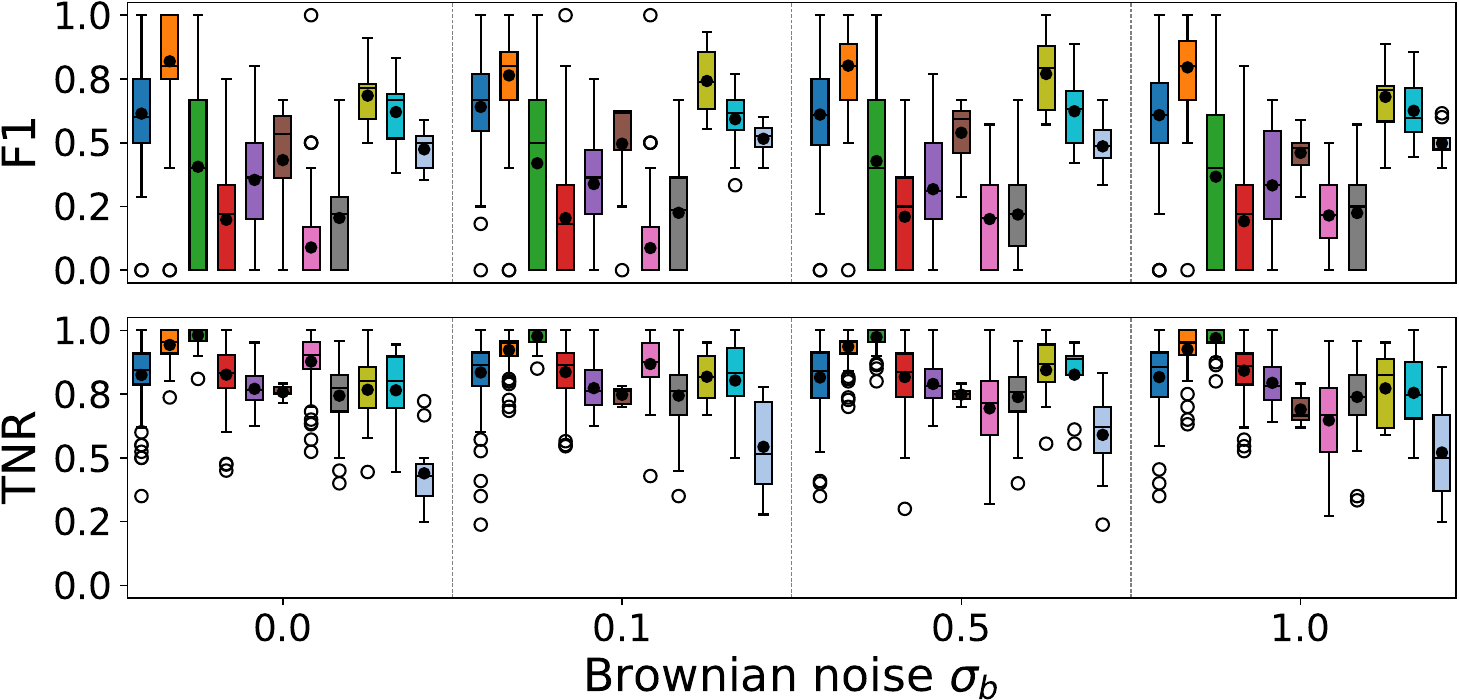}
        \caption{${\rm{OU}}(\sigma_g^m > 0)$}
        \label{fig:stat-stocklike}
    \end{subfigure}
    \begin{subfigure}[b]{0.49\textwidth}
        \centering
        \includegraphics[width=\textwidth]{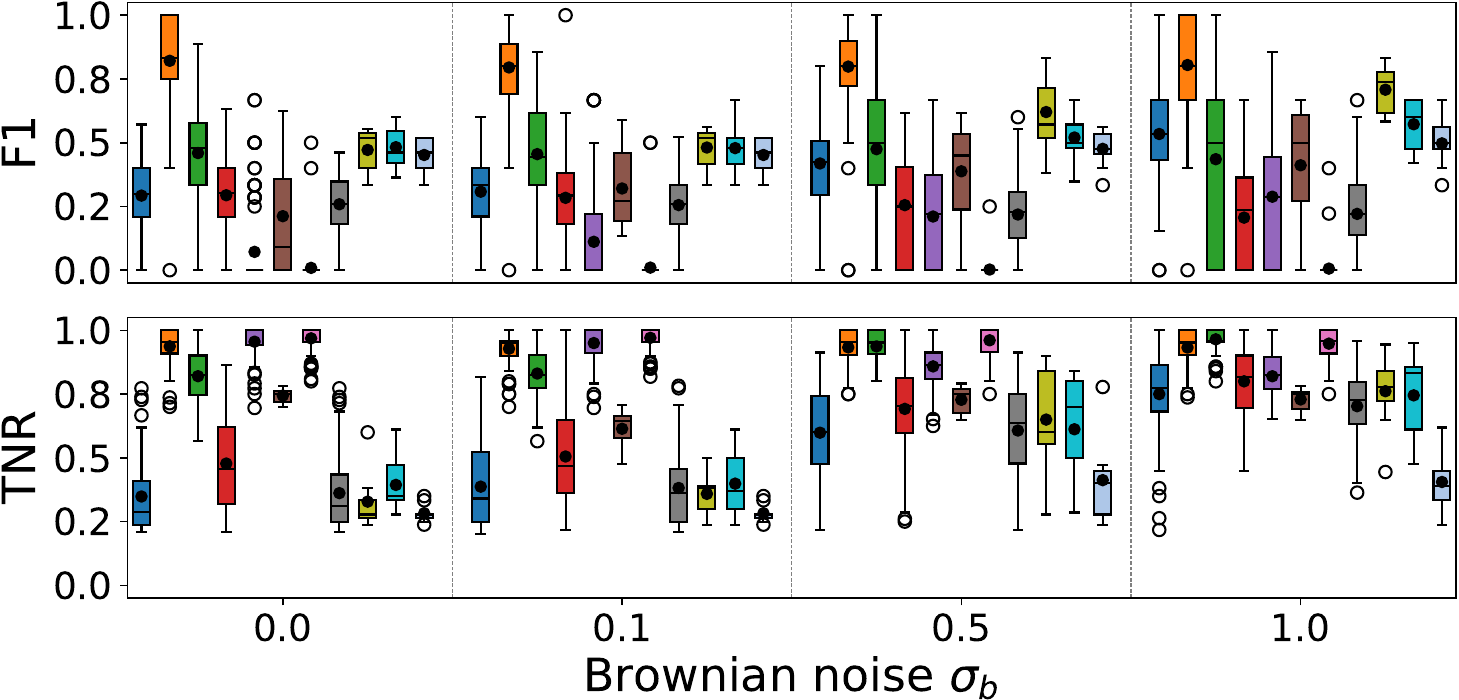}
        \caption{$\widehat{\rm{OU}}(\sigma_g^m > 0)$}
        \label{fig:nonstat-stocklike}
    \end{subfigure}
    \begin{subfigure}[b]{0.49\textwidth}
        \centering
        \includegraphics[width=\textwidth]{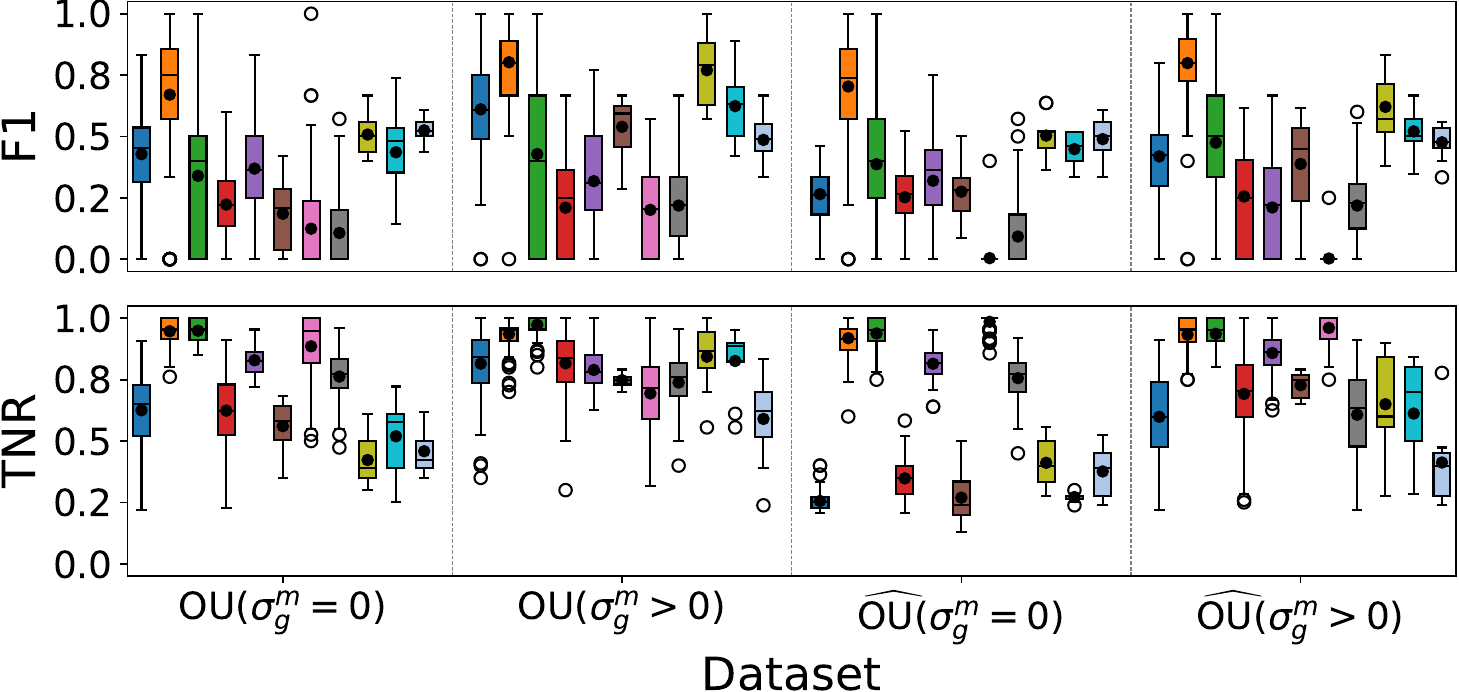}
        \caption{$\sigma_b=0.5$.}
        \label{fig:sigmab05}
    \end{subfigure}
    \begin{subfigure}[b]{0.49\textwidth}
        \centering
        \includegraphics[width=\textwidth]{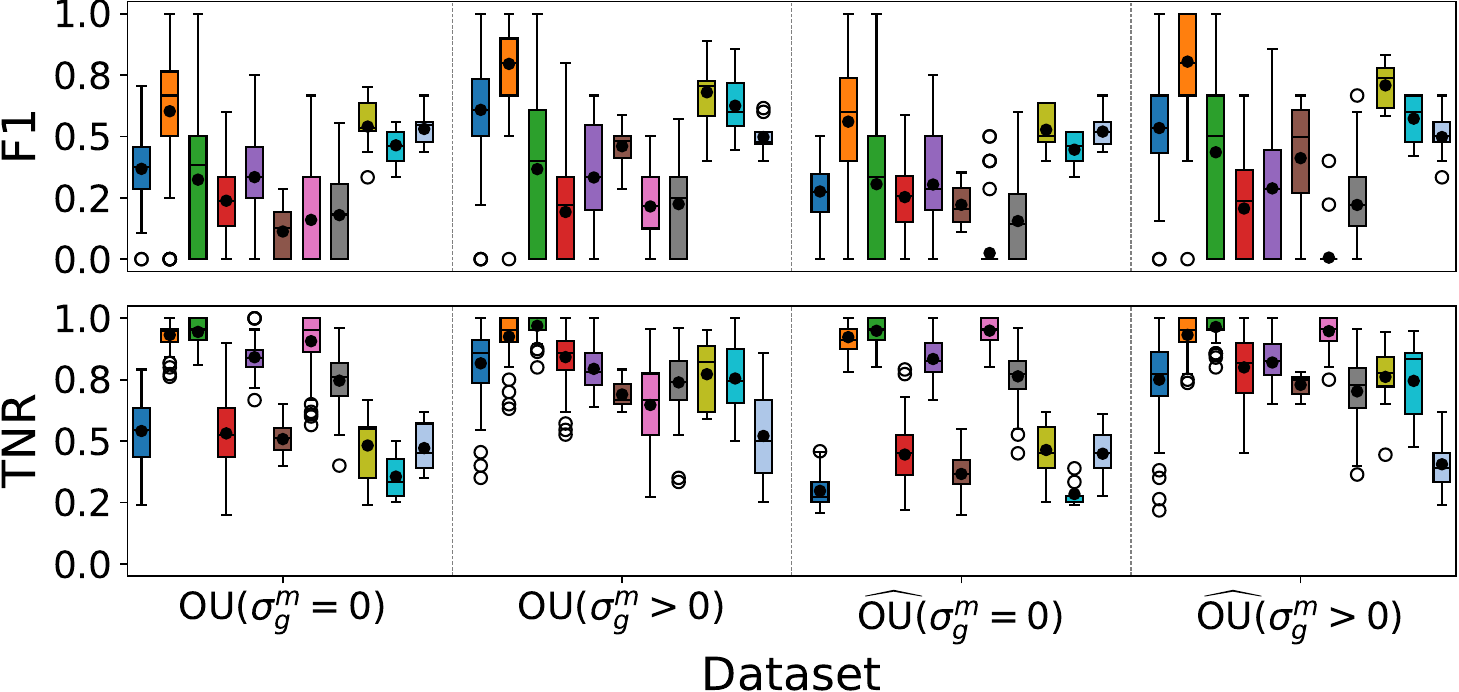}
        \caption{$\sigma_b=1.0$.}
        \label{fig:sigmab1}
    \end{subfigure}
    \caption{Results on synthetic datasets, with $N=5, \ C = 0.5, \ \sigma_g^a = 1.0$.}
    \label{fig:syn_res}
\end{figure}

A complete overview of the results can be found in~\Cref{tab:N5sigmag1.0_short_f1_tnr} for the case of $N=5$ and $\sigma_g^a=1$, averaging over $\sigma_b \in \{0, 0.1, 0.5,1\}$. 
Additional results, including scenarios with $N=10$ and $\sigma_g^a=0.5$, can be found in the Appendix.
Results obtained with the basic \textbf{Geweke} method have been excluded from ~\Cref{fig:syn_res},  ~\Cref{tab:N5sigmag1.0,tab:realdataresults} due to its consistently poor performance. 
In particular, the \textit{True Negative Rate} (TNR) across the same experimental settings is uniformly zero, as the method fails to reject any spurious connections.

\begin{table*}
\renewcommand{\arraystretch}{1.4}\centering
\scriptsize
\caption{F1 and TNR Score - $N=5, \ \sigma_g^a=1.0$}
\label{tab:N5sigmag1.0_short_f1_tnr}
\resizebox{\linewidth}{!}{
\begin{tabular}{cccccccccccc}
\toprule
\textbf{Dataset} & \textbf{C} & \textbf{Score} & \textbf{Granger} & \textbf{PLACy} & \textbf{PCMCI} & \textbf{RCV-VarLiNGAM} & \textbf{DYNOTEARS} & \textbf{PCMCI$_{\boldsymbol{\Omega}}$} & \textbf{BCGeweke} & \textbf{DTF} & \textbf{GewekeNP} \\
\midrule
\multirow{6}{*}{\rotatebox{90}{${\rm{OU}}(\sigma_g^m = 0)$}} & \multirow{2}{*}{0.2} & F1 & 0.58{\tiny$\pm$0.26} & 0.14{\tiny$\pm$0.22} & 0.15{\tiny$\pm$0.24} & 0.40{\tiny$\pm$0.19} & 0.17{\tiny$\pm$0.18} & 0.07{\tiny$\pm$0.13} & \textbf{0.61}{\tiny$\pm$0.18} & 0.35{\tiny$\pm$0.18} & 0.59{\tiny$\pm$0.06} \\
 &  & TNR & 0.75{\tiny$\pm$0.19} & 0.96{\tiny$\pm$0.04} & \textbf{0.97{\tiny$\pm$0.04}} & 0.76{\tiny$\pm$0.08} & 0.69{\tiny$\pm$0.24} & 0.78{\tiny$\pm$0.09} & 0.57{\tiny$\pm$0.24} & 0.70{\tiny$\pm$0.25} & 0.60{\tiny$\pm$0.10} \\
 & \multirow{2}{*}{0.5} & F1 & 0.56{\tiny$\pm$0.25} & \textbf{0.73}{\tiny$\pm$0.24} & 0.20{\tiny$\pm$0.27} & 0.39{\tiny$\pm$0.19} & 0.08{\tiny$\pm$0.18} & 0.11{\tiny$\pm$0.14} & 0.58{\tiny$\pm$0.15} & 0.52{\tiny$\pm$0.18} & 0.52{\tiny$\pm$0.07} \\
 &  & TNR & 0.73{\tiny$\pm$0.20} & 0.95{\tiny$\pm$0.05} & \textbf{0.97{\tiny$\pm$0.04}} & 0.79{\tiny$\pm$0.08} & 0.93{\tiny$\pm$0.10} & 0.76{\tiny$\pm$0.11} & 0.54{\tiny$\pm$0.23} & 0.67{\tiny$\pm$0.21} & 0.46{\tiny$\pm$0.13} \\
 & \multirow{2}{*}{1.0} & F1 & 0.53{\tiny$\pm$0.25} & \textbf{0.77}{\tiny$\pm$0.17} & 0.28{\tiny$\pm$0.29} & 0.32{\tiny$\pm$0.21} & 0.08{\tiny$\pm$0.17} & 0.13{\tiny$\pm$0.15} & 0.56{\tiny$\pm$0.15} & 0.54{\tiny$\pm$0.13} & 0.52{\tiny$\pm$0.08} \\
 &  & TNR & 0.70{\tiny$\pm$0.22} & 0.91{\tiny$\pm$0.07} & \textbf{0.96{\tiny$\pm$0.04}} & 0.81{\tiny$\pm$0.10} & 0.91{\tiny$\pm$0.11} & 0.75{\tiny$\pm$0.12} & 0.51{\tiny$\pm$0.23} & 0.55{\tiny$\pm$0.20} & 0.47{\tiny$\pm$0.13} \\
\cline{1-12}
\multirow{6}{*}{\rotatebox{90}{$\widehat{{\rm{OU}}}(\sigma_g^m = 0)$}} & \multirow{2}{*}{0.2} & F1 & 0.47{\tiny$\pm$0.28} & 0.24{\tiny$\pm$0.25} & 0.16{\tiny$\pm$0.25} & 0.41{\tiny$\pm$0.19} & 0.02{\tiny$\pm$0.10} & 0.08{\tiny$\pm$0.13} & \textbf{0.58}{\tiny$\pm$0.18} & 0.44{\tiny$\pm$0.07} & 0.49{\tiny$\pm$0.10} \\
 &  & TNR & 0.55{\tiny$\pm$0.30} & 0.93{\tiny$\pm$0.06} & 0.96{\tiny$\pm$0.05} & 0.76{\tiny$\pm$0.07} & \textbf{0.98{\tiny$\pm$0.03}} & 0.78{\tiny$\pm$0.09} & 0.52{\tiny$\pm$0.25} & 0.27{\tiny$\pm$0.03} & 0.38{\tiny$\pm$0.16} \\
 & \multirow{2}{*}{0.5} & F1 & 0.45{\tiny$\pm$0.27} & \textbf{0.69}{\tiny$\pm$0.22} & 0.21{\tiny$\pm$0.26} & 0.37{\tiny$\pm$0.20} & 0.01{\tiny$\pm$0.06} & 0.10{\tiny$\pm$0.14} & 0.55{\tiny$\pm$0.16} & 0.44{\tiny$\pm$0.07} & 0.46{\tiny$\pm$0.07} \\
 &  & TNR & 0.54{\tiny$\pm$0.29} & 0.92{\tiny$\pm$0.06} & 0.97{\tiny$\pm$0.04} & 0.79{\tiny$\pm$0.08} & \textbf{0.98{\tiny$\pm$0.04}} & 0.76{\tiny$\pm$0.10} & 0.48{\tiny$\pm$0.23} & 0.27{\tiny$\pm$0.03} & 0.32{\tiny$\pm$0.10} \\
 & \multirow{2}{*}{1.0} & F1 & 0.45{\tiny$\pm$0.26} & \textbf{0.70}{\tiny$\pm$0.17} & 0.29{\tiny$\pm$0.30} & 0.32{\tiny$\pm$0.19} & 0.01{\tiny$\pm$0.06} & 0.13{\tiny$\pm$0.15} & 0.54{\tiny$\pm$0.16} & 0.43{\tiny$\pm$0.09} & 0.46{\tiny$\pm$0.09} \\
 &  & TNR & 0.54{\tiny$\pm$0.29} & 0.88{\tiny$\pm$0.09} & 0.96{\tiny$\pm$0.05} & 0.81{\tiny$\pm$0.09} & \textbf{0.98{\tiny$\pm$0.05}} & 0.75{\tiny$\pm$0.12} & 0.48{\tiny$\pm$0.22} & 0.26{\tiny$\pm$0.02} & 0.33{\tiny$\pm$0.12} \\
\cline{1-12}
\multirow{6}{*}{\rotatebox{90}{${\rm{OU}}(\sigma_g^m > 0)$}} & \multirow{2}{*}{0.2} & F1 & 0.63{\tiny$\pm$0.21} & 0.72{\tiny$\pm$0.24} & 0.23{\tiny$\pm$0.29} & 0.39{\tiny$\pm$0.18} & 0.22{\tiny$\pm$0.14} & 0.16{\tiny$\pm$0.16} & \textbf{0.79}{\tiny$\pm$0.11} & 0.50{\tiny$\pm$0.16} & 0.51{\tiny$\pm$0.08} \\
 &  & TNR & 0.87{\tiny$\pm$0.09} & 0.96{\tiny$\pm$0.05} & \textbf{0.98{\tiny$\pm$0.03}} & 0.76{\tiny$\pm$0.08} & 0.52{\tiny$\pm$0.20} & 0.76{\tiny$\pm$0.11} & 0.88{\tiny$\pm$0.09} & 0.91{\tiny$\pm$0.07} & 0.64{\tiny$\pm$0.12} \\
 & \multirow{2}{*}{0.5} & F1 & 0.62{\tiny$\pm$0.20} & \textbf{0.80}{\tiny$\pm$0.17} & 0.40{\tiny$\pm$0.33} & 0.34{\tiny$\pm$0.20} & 0.15{\tiny$\pm$0.17} & 0.22{\tiny$\pm$0.16} & 0.74{\tiny$\pm$0.11} & 0.64{\tiny$\pm$0.14} & 0.50{\tiny$\pm$0.07} \\
 &  & TNR & 0.82{\tiny$\pm$0.13} & 0.93{\tiny$\pm$0.06} & \textbf{0.97{\tiny$\pm$0.04}} & 0.78{\tiny$\pm$0.08} & 0.77{\tiny$\pm$0.17} & 0.74{\tiny$\pm$0.12} & 0.79{\tiny$\pm$0.13} & 0.79{\tiny$\pm$0.12} & 0.48{\tiny$\pm$0.17} \\
 & \multirow{2}{*}{1.0} & F1 & 0.57{\tiny$\pm$0.18} & \textbf{0.77}{\tiny$\pm$0.18} & 0.60{\tiny$\pm$0.30} & 0.28{\tiny$\pm$0.21} & 0.17{\tiny$\pm$0.18} & 0.23{\tiny$\pm$0.18} & 0.69{\tiny$\pm$0.14} & 0.59{\tiny$\pm$0.12} & 0.51{\tiny$\pm$0.08} \\
 &  & TNR & 0.78{\tiny$\pm$0.16} & 0.91{\tiny$\pm$0.09} & \textbf{0.97{\tiny$\pm$0.05}} & 0.81{\tiny$\pm$0.09} & 0.75{\tiny$\pm$0.16} & 0.73{\tiny$\pm$0.12} & 0.73{\tiny$\pm$0.17} & 0.69{\tiny$\pm$0.18} & 0.49{\tiny$\pm$0.15} \\
\cline{1-12}
\multirow{6}{*}{\rotatebox{90}{$\widehat{{\rm{OU}}}(\sigma_g^m > 0)$}} & \multirow{2}{*}{0.2} & F1 & 0.39{\tiny$\pm$0.20} & \textbf{0.75}{\tiny$\pm$0.21} & 0.32{\tiny$\pm$0.27} & 0.22{\tiny$\pm$0.23} & 0.02{\tiny$\pm$0.10} & 0.22{\tiny$\pm$0.14} & 0.61{\tiny$\pm$0.17} & 0.46{\tiny$\pm$0.13} & 0.46{\tiny$\pm$0.08} \\
 &  & TNR & 0.55{\tiny$\pm$0.24} & 0.95{\tiny$\pm$0.05} & 0.89{\tiny$\pm$0.10} & 0.89{\tiny$\pm$0.10} & \textbf{0.96{\tiny$\pm$0.05}} & 0.53{\tiny$\pm$0.20} & 0.58{\tiny$\pm$0.22} & 0.61{\tiny$\pm$0.22} & 0.40{\tiny$\pm$0.17} \\
 & \multirow{2}{*}{0.5} & F1 & 0.39{\tiny$\pm$0.18} & \textbf{0.80}{\tiny$\pm$0.17} & 0.46{\tiny$\pm$0.26} & 0.17{\tiny$\pm$0.21} & 0.01{\tiny$\pm$0.05} & 0.24{\tiny$\pm$0.14} & 0.57{\tiny$\pm$0.13} & 0.52{\tiny$\pm$0.11} & 0.46{\tiny$\pm$0.08} \\
 &  & TNR & 0.52{\tiny$\pm$0.23} & 0.93{\tiny$\pm$0.06} & 0.89{\tiny$\pm$0.10} & 0.90{\tiny$\pm$0.09} & \textbf{0.96{\tiny$\pm$0.05}} & 0.51{\tiny$\pm$0.21} & 0.54{\tiny$\pm$0.18} & 0.54{\tiny$\pm$0.20} & 0.35{\tiny$\pm$0.12} \\
 & \multirow{2}{*}{1.0} & F1 & 0.39{\tiny$\pm$0.18} & \textbf{0.78}{\tiny$\pm$0.17} & 0.55{\tiny$\pm$0.25} & 0.15{\tiny$\pm$0.20} & 0.01{\tiny$\pm$0.07} & 0.26{\tiny$\pm$0.14} & 0.54{\tiny$\pm$0.13} & 0.52{\tiny$\pm$0.10} & 0.46{\tiny$\pm$0.08} \\
 &  & TNR & 0.51{\tiny$\pm$0.24} & 0.91{\tiny$\pm$0.08} & 0.89{\tiny$\pm$0.10} & 0.91{\tiny$\pm$0.09} & \textbf{0.96{\tiny$\pm$0.06}} & 0.51{\tiny$\pm$0.20} & 0.51{\tiny$\pm$0.18} & 0.48{\tiny$\pm$0.18} & 0.35{\tiny$\pm$0.13} \\
\bottomrule
\end{tabular}
}
\end{table*}

\textbf{Key takeaway:} Our method consistently outperforms existing approaches across all scenarios, demonstrating strong robustness to both structural variations and noise.
In particular,~\Cref{fig:stat-stocklike,fig:nonstat-stocklike} show that our approach, \algoName, achieves overall the best performance in terms of F1-score for all the noise settings $\sigma_b$, for ${\rm{OU}}(\sigma_g^m > 0)$ and $\widehat{\rm{OU}}(\sigma_g^m > 0)$. 
In fact, the presence of multiplicative Gaussian noise introduces non-stationarity, to which other methods are highly sensitive. 
In contrast, analyzing causal dynamics via spectral parameters enables \algoName to filter variability from multiplicative noise, improving robustness. Our approach outperforms the other methods even in the absence of multiplicative noise, for $\sigma_b=0.5$ and $\sigma_b=1$ (\Cref{fig:sigmab05,fig:sigmab1}): while other methods heavily rely on stationary assumptions, \algoName can capture structural shifts in spectral parameters, identifying genuine causal relationships without confusing transient behaviors for structural causal patterns. 

In all the experiments in~\Cref{fig:syn_res}, the TNR of \algoName is always very high, sometimes moderately outperformed by \textbf{PCMCI}. 
This is due to the fact that \textbf{PCMCI} is designed to explicitly control false positives, with a conservative approach to edge selection (MCI phase). 
\textbf{PLaCy}, however, shows a more permissive behavior in edge inclusion. 
This causes occasional acceptance of marginal causal association, but it is also the key to capturing genuine causal relations, as confirmed by the higher F1-score.  
\Cref{fig:nonstat-stocklike} considers the impact of both multiplicative noise and non-equilibrium initialization, two aspects that constitute a significant challenge for traditional CD approaches. 
In this setting, \algoName significantly outperforms the other methods, with the highest F1-score and TNR close to $1$, thanks to its capability to distinguish meaningful causal perturbations in spectral trends from spurious correlations possibly due to transient non-stationary dynamics.

\textbf{PCMCI}'s lack of robustness in the non-Gaussian and non-stationary noise scenario is evident from its low F1-score.  
The same limitations apply to its generalized version, \textbf{PCMCI}$_{\boldsymbol{\Omega}}$, designed to handle semi-stationary causal relationships.


%
Regarding the other methods,
although \textbf{Granger causality} is not designed for nonstationary data or non-Gaussian noise, it often retains moderate effectiveness in inferring causal relationships in such settings. 
%
Even if \textbf{CCM-Filtering} is designed to improve the robustness of the original CCM by reducing high-frequency noise and preserving lower-frequency signals, this solution does not generalize to stochastic processes, where delay embeddings fail to capture a deterministic manifold.
\textbf{RCV-VarLiNGAM} does not show good results in a non-stationary noise environment, as expected by its assumption of stationarity.
Although designed to uncover causal relationships via exploitation of non-Gaussian noise, the improvement with respect to other models remains limited. 
%
\textbf{DYNOTEARS} is designed to handle additive noise, but in our experiments, it struggles under multiplicative perturbations and strong non-stationarity. 
As a result, the method tends to miss true causal links, despite maintaining a high TNR. 
\textbf{Rhino} achieves moderate performance as proved by the experiments in~\Cref{fig:stat-stocklike,fig:nonstat-stocklike}, validating the authors' claim that the method is robust to increasing non-Gaussian noise. 
However, its computational complexity does not justify the lower performance compared to our approach, which also preserves sample efficiency. 
Finally, the three frequency-domain algorithms \textbf{BCGeweke}, \textbf{DTF}, and \textbf{GewekeNP}, achieve good 
F1 scores throughout the experimental campaign, suggesting that exploiting spectral properties provides a valuable avenue for detecting causal dependencies. Nevertheless, their high F1 score comes at the expense of a low TNR. Remarkably, PLaCy not only surpasses these methods in terms of F1 accuracy, but also mitigates their TNR weakness, achieving a more balanced performance.

\begin{table}
    \scriptsize
    \centering
    \caption{Performance on real-world datasets.}
    \resizebox{\linewidth}{!}{
    \begin{tabular}{lcccc}
        \toprule
        & \multicolumn{2}{c}{\textbf{Rivers}} & \multicolumn{2}{c}{\textbf{AirQuality}} \\
        \cmidrule(lr){2-3} \cmidrule(lr){4-5}
        \textbf{Algorithm} & \textbf{F1} & \textbf{TNR} & \textbf{F1} & \textbf{TNR} \\
        \midrule
        \textbf{Granger} & 0.47\tiny{$\pm 0.07$} & 0.64\tiny{$\pm 0.09$} & 0.41\tiny{$\pm 0.02$} & 0.22\tiny{$\pm 0.05$} \\
        \algoName & \textbf{0.51}\tiny{$\pm 0.10$} & \textbf{0.75}\tiny{$\pm 0.13$} & \textbf{0.45}\tiny{$\pm 0.04$} & 0.66\tiny{$\pm 0.07$} \\
        \textbf{PCMCI} & 0.47\tiny{$\pm 0.07$} & 0.74\tiny{$\pm 0.05$} & 0.25\tiny{$\pm 0.03$} & \textbf{0.95}\tiny{$\pm 0.02$} \\
        \textbf{CCM} & 0.28\tiny{$\pm 0.01$} & 0.19\tiny{$\pm 0.00$} & 0.40\tiny{$\pm 0.00$} & 0.04\tiny{$\pm 0.00$} \\
        \textbf{RCV} & 0.16\tiny{$\pm 0.12$} & 0.51\tiny{$\pm 0.12$} & --- & --- \\
        \textbf{Rhino} & 0.29\tiny{$\pm 0.03$} & 0.35\tiny{$\pm 0.05$} & 0.44\tiny{$\pm 0.01$} & 0.23\tiny{$\pm 0.04$} \\
        \textbf{DYNO.} & 0.12\tiny{$\pm 0.07$} & 0.53\tiny{$\pm 0.06$} & 0.37\tiny{$\pm 0.08$} & 0.92\tiny{$\pm 0.03$} \\
        \textbf{PCMCI}$_{\boldsymbol{\Omega}}$ & 0.10\tiny{$\pm 0.09$} & 0.57\tiny{$\pm 0.05$} & 0.36\tiny{$\pm 0.04$} & 0.69\tiny{$\pm 0.07$} \\

        \textbf{BCGeweke} & 0.41\tiny{$\pm 0.06$} & 0.61\tiny{$\pm 0.08$} & 0.39\tiny{$\pm 0.02$} & 0.16\tiny{$\pm 0.07$} \\

        \textbf{DTF} & 0.36\tiny{$\pm 0.05$} & 0.42\tiny{$\pm 0.12$} & 0.43\tiny{$\pm 0.02$} & 0.36\tiny{$\pm 0.09$} \\

        \textbf{GewekeNP} & 0.26\tiny{$\pm 0.05$} & 0.31\tiny{$\pm 0.06$} & 0.40\tiny{$\pm 0.02$} & 0.13\tiny{$\pm 0.07$}\\

        \bottomrule
    \end{tabular}
    }
    \label{tab:realdataresults}
\end{table}

\subsection{Real Data Scenarios}\label{sec:real_dataset}
As real-world scenarios, we consider two datasets with known causal graphs:

$\bullet$ \textbf{Rivers} dataset\footnote{Bavarian Environmental Agency data provider: \url{https://www.gkd.bayern.de}.} ~\cite{ahmad2022causal} contains $N=6$ time series from three hydrological stations located in southern Germany, namely Dillingen, Kempten, and Lenggries. 
At each location, river level and local precipitation are recorded over several thousand time steps. 
Since the Iller feeds into the Danube, increases in its discharge are expected to impact the Danube with a one-day lag. 

$\bullet$ \textbf{AirQuality} (\textbf{AQI}) dataset\footnote{Microsoft data provider: \url{https://www.microsoft.com/en-us/research/project/urban-computing}.} contains hourly PM2.5 pollution measurements collected over one year from $N=36$ monitoring stations across various Chinese cities. 
Following the benchmark protocol of CausalTime~\cite{cheng2024causaltime}, we use the provided reference causal graph, whose causal matrix is constructed from pairwise sensor distances. 

We extract sub-samples of length 500 from the original time series.

\vspace{-.1cm}
\paragraph{Results}
\Cref{tab:realdataresults} reports the results for both datasets. 
Our method achieves overall competitive or the best performance in terms of F1-score and TNR for both datasets. 

\textbf{Key takeaway:}
The Rivers dataset poses an additional challenge due to its heterogeneous dynamics and the presence of exogenous factors such as seasonal precipitation. 
Because precipitation series lack clear power-law behavior, this dataset highlights PLaCy's ability to generalize beyond its core assumptions. 
%
Indeed, our method achieves robust performance and surpasses the other methods in detecting the causal effects of precipitation and rivers' flow in both F1 and TNR.
%
On the other hand, the AirQuality dataset includes missing values, which were filled using linear interpolation. 
Missing data, common in environmental datasets, challenges most CD methods. \algoName maintains competitive performance despite these imperfections, highlighting its robustness to real-world data issues thanks to the advantages of performing causal discovery in the frequency domain, which is inherently more resilient to missing data and noise. \textbf{RCV-VarLiNGAM}, failed to converge. 
In our tests, we observe that this behavior is due to the impossibility of running the Cholesky decomposition on the residual covariance matrix, which results in being non–positive definite as a consequence of missing data. 

Despite the original CCM promises to be robust in correctly excluding causal links between non-coupled variables in the presence of external forcing, the performance of \textbf{CCM-Filtering}, and in particular the achieved TNR, degrades on more complex data, influenced by unobserved exogenous factors.
The same considerations mentioned about the TNR attained by \textbf{PCMCI} on the synthetic datasets (\cref{sec:SyntheticScenarios}) hold on AirQuality, where the high TNR is achieved at the expense of the lowest F1. 

Interestingly, on the AirQuality dataset, \textbf{BCGeweke} exhibits a comparatively low TNR.  
This limitation likely stems from the algorithm’s design: it analyzes distinct sub-bands of the spectral domain and infers causality whenever at least one sub-band yields a positive result in the Breitung–Candelon (BC) causal test. In this case study, the method likely produced false positives within one of these sub-bands, leading to an overall degradation in TNR.



\section{Limitations}\label{sec:limitations}
Despite the strong performance demonstrated in the experimental campaign described in~\Cref{sec:experiments}, \algoName presents some limitations that should be acknowledged.
First, we recall that \algoName is not intended as a universal causal discovery method, but rather as a specialized approach that is effective for systems exhibiting scale-free spectral structure. Moreover, our method struggles to correctly identify causal relationships in the presence of slowly varying spectra.
Some strategies may improve the algorithm’s performance in such scenarios, such as increasing the number of lags considered in the causal analysis or extending the length of the analysis window (see Appendix). 
Nevertheless, in these cases, time-domain analyses like Granger causality may be more appropriate. 
While this represents a limitation, a reasonable choice of the causal analysis method can be guided by a preliminary spectral inspection.
Finally, the method is not well-suited for very short time series, as it relies on local spectral estimation, which requires a minimum sequence length to produce stable features.


\section{Conclusions}
\label{sec:conclusion}
This study introduces \algoName, a novel algorithm for causal discovery in stochastic time series, leveraging power spectrum analysis to identify underlying causal structures.
An extensive experimental campaign on both synthetic and real-world datasets reveals the effectiveness of this methodology in comparison to state-of-the-art methods.
Our findings underscore the advantages of frequency-domain analysis for causal discovery, highlighting its potential to avoid detecting spurious associations as causal relationships while maintaining high F1 scores. Future work will focus on extending the idea of exploiting the power spectrum to enhance non-VAR causal discovery methods. A deeper investigation should be conducted on the summary statistical parameters of the spectral density. Furthermore, an additional study must be conducted to address the possible presence of latent confounders.

\section*{Impact Statement}
This paper presents work whose goal is to advance the field of machine learning. There are many potential societal consequences of our work, none of which we feel must be specifically highlighted here.

\newpage


\balance

\bibliography{bibliography}
\bibliographystyle{icml2026}

\newpage
\appendix
\onecolumn
\section{Robust Causal Discovery in Real-World Time Series with Power-Law: \\
Supplementary Materials}

\section{Theoretical Analysis}

The proof of Theorem~\ref{thm:spectral-invariance} is articulated in two main steps:

\begin{enumerate}[label=\textbf{(\Alph*)}, leftmargin=*, itemindent=0pt, labelindent=0pt, labelsep=.5em]
\vspace{-.2cm}
    \item We first show that the feature sequence $(\mathbf{a}, \boldsymbol{\mathbf{\lambda}})$ satisfies the classical assumptions required for valid inference using vector autoregressive (VAR) models, as established in~\cite{lutkepohl2005new}.
    \item We then demonstrate that the transformation $\mathcal{T}$, described in~\Cref{alg:placy}, preserves the structure of the ground-truth causal graph $\mathcal{G}^*$, meaning that no spurious edges are introduced and no genuine dependencies are lost.
\end{enumerate}

By combining these two results, we conclude that applying Granger causality analysis to the transformed sequence $(\mathbf{a}, \boldsymbol{\mathbf{\lambda}})$ enables consistent recovery of the original causal graph $\mathcal{G}^*$.

\vspace{0.5em}

In order to validate step \textbf{(A)}, we decompose it into the following supporting claims:

\begin{enumerate}[label=\textbf{(A\arabic*)}, leftmargin=*, itemindent=0pt, labelindent=0pt, labelsep=.5em]
\vspace{-.2cm}
    \item \textbf{Weak Stationarity and Noise Assumptions:} The sequences $(\mathbf{a}, \boldsymbol{\lambda})$  are approximately weakly stationary and with an asymptotically Gaussian noise.
    \item \textbf{Preservation of Linear Dependence under Spectral Transformation:} The transformation $\mathcal{T}$, described in~\Cref{alg:placy}, preserves linear relationships between corresponding time series. 
    That is, if two time series are linearly dependent (e.g., one is a scalar multiple of the other), then their corresponding spectral features remain linearly dependent. 
    Conversely, if the time series are not collinear in the time domain, their spectral representations will also be linearly independent.
\end{enumerate}

Step \textbf{(B)} relies on just one structural condition:

\begin{enumerate}[label=\textbf{(B\arabic*)}, leftmargin=*, itemindent=0pt, labelindent=0pt, labelsep=.5em]
\vspace{-.2cm}
    \item \textbf{Spectral Causality Preservation:} Causal relationships in the time domain that manifest through changes in spectral amplitude induce dependencies among the corresponding spectral features.
\end{enumerate}

Step \textbf{(A1)} establish the key requirements needed to apply a VAR analysis for studying Granger causality.

\Cref{the:GaussianStationary} discusses the assumptions of weak stationarity and Gaussianity of the noise. 
It shows that the procedure $\mathcal{T}$, described in~\Cref{alg:placy}, transforms any colored noise affecting $\mathbf{x}$ into an asymptotically Gaussian noise on $(\mathbf{a}, \boldsymbol{\lambda})$.
This proves step \textbf{(A1)}.

\Cref{the:lineardep} addresses the component \textbf{(A2)}, proving the preservation of linear dependencies under the transformation $\mathcal{T}$.

Finally,~\Cref{the:Spectralcausalitypr} concludes the proof showing that causal relationships are preserved in the transformed time series $(\mathbf{a}, \boldsymbol{\lambda})$ \textbf{(B1)}.

\subsection*{(A1) Weak Stationarity and Noise Assumptions}

\begin{theorem}[Asymptotic Gaussianity and Stationarity under Colored Noise]
\label{the:GaussianStationary}
Let $\mathbf{x} = \{x(t)\}$ be a weakly stationary process with power spectral density $S(f_k) \propto 1/f_k^\alpha$ for $\alpha \geq 0$, where $f_k$ denotes the $k$-th frequency component. Let $\phi(k)$ be the DFT over a window of size $l$, and define $(\mathbf{a}, \boldsymbol{\lambda})$ as in~\Cref{alg:placy}.
Then, for sufficiently large $l$, the sequences $(\mathbf{a}, \boldsymbol{\lambda})$ are approximately Gaussian and weakly stationary.
\end{theorem}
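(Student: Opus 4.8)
My plan is to establish the two claims — asymptotic Gaussianity and weak stationarity of $(\mathbf{a}, \boldsymbol{\lambda})$ — separately, exploiting the structure of the DFT as a linear functional of the underlying weakly stationary process. For the Gaussianity part, the key observation is that each Fourier coefficient $\phi(k) = \sum_{t=0}^{l-1} x(t)\, e^{-i2\pi kt/l}$ is a linear combination of $l$ samples of $x$. Under standard mixing or summable-autocovariance conditions for a weakly stationary process (which I would state explicitly as part of the ``sufficiently regular'' hypothesis, citing the classical spectral theory of stationary processes), a central-limit theorem for dependent sequences applies: as the window length $l \to \infty$, the normalized real and imaginary parts of $\phi(k)$ converge jointly in distribution to independent zero-mean Gaussians with variance proportional to $S(f_k)$, a result going back to the asymptotic theory of the periodogram. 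Consequently $\log A(f_k) = \log|\phi(k)|$ is, asymptotically, a log-Rayleigh variable, and the ordinary least squares estimates $(a,\lambda)$ — being fixed linear combinations of the vector $(\log A(f_1),\dots,\log A(f_{l/2}))$ across frequencies — are asymptotically jointly Gaussian by the delta method combined with a CLT over the $\Theta(l)$ approximately-independent frequency bins. I would remark that the number of frequency bins grows with $l$, which is what drives the second layer of Gaussian convergence for the regression coefficients.

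For weak stationarity, the argument is more elementary: I would show that the joint distribution of the DFT coefficients computed on a window starting at time $t_0$ depends on $t_0$ only through a deterministic phase factor $e^{-i2\pi k t_0/l}$, which cancels upon taking magnitudes $A(f_k) = |\phi(k)|$. Hence the distribution of the amplitude spectrum, and therefore of the OLS-fitted pair $(a^k,\lambda^k)$, is invariant under the window index $k$ — the sequence $(\mathbf{a},\boldsymbol\lambda)$ is (strictly, hence weakly) stationary in the idealized case. The power-law form $S(f_k)\propto f_k^{-\alpha}$ enters to guarantee that the regression model $\log A(f) = a - \lambda \log f$ is correctly specified in expectation, so that the OLS estimates have well-defined, finite first and second moments independent of $k$; I would verify finiteness of $\mathbb{E}[(\log A(f_k))^2]$ using the asymptotic log-Rayleigh law, whose moments are finite.

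The main obstacle, I expect, is making the ``approximately'' precise without either overclaiming or drowning in technicalities: the periodogram ordinates are only asymptotically (not exactly) independent and Gaussian, and for long-memory processes with $\alpha$ close to or exceeding $1$ the classical periodogram CLT requires care (the low-frequency bins have heavily inflated variance and the bias of the log-periodogram regression — the Geweke–Porter-Hudak estimator — is well documented). I would handle this by (i) restricting attention to the regime $0 \le \alpha < 1$ where the standard results apply cleanly, or trimming the lowest frequencies, and (ii) phrasing the conclusion as convergence in distribution as $l\to\infty$ with the regression residuals becoming asymptotically i.i.d. Gaussian, which is exactly the hypothesis a VAR/Granger analysis needs. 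A secondary subtlety is that consecutive windows overlap (stride $s < l$), so the feature sequence is not independent across $k$ — but stationarity, not independence, is what Theorem~\ref{the:GaussianStationary} asserts, and the Markov-order bound of Theorem~\ref{the:MSF} already accounts for the induced short-range dependence; I would simply note this compatibility rather than re-derive it.
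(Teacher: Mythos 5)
Your proposal follows essentially the same route as the paper's proof: invoke the classical asymptotic theory of the DFT of a weakly stationary process (the paper cites Brillinger) to get $\phi_k \xrightarrow{d} \mathcal{CN}(0,S(f_k))$ and Rayleigh-distributed amplitudes, then argue that the OLS estimates $(a,\lambda)$ are approximately Gaussian because they are linear combinations of many weakly dependent log-amplitudes, and finally note that overlapping windows induce only weak temporal correlation in the feature sequence. If anything, you are more careful than the paper — your explicit shift-invariance argument for stationarity (which the paper leaves implicit) and your caveats about long-memory regimes $\alpha \gtrsim 1$ and low-frequency bias of the log-periodogram regression address gaps the published proof glosses over.
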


\begin{proof}
As shown in~\cite{brillinger1981time}, if $\mathbf{x}$ is a weakly stationary process with decaying autocorrelation, then the Discrete Fourier Transform (DFT) coefficients computed over a window of size $l$ converge in distribution to complex Gaussian variables with variance given by the power spectral density $S(f_k)$. 
That is,
\[
\phi_k \xrightarrow{d} \mathcal{CN}(0, S(f_k)).
\]
The spectral amplitudes $A_k = |\phi_k|$ follow a Rayleigh distribution with scale parameter determined by $S(f_k)$. 
Let $(\mathbf{a}, \boldsymbol{\lambda})$ be computed as in~\Cref{alg:placy}, by linear regression on the pairs $(\log f_k, \log A_k)$ within each window.
Although the $\log A_k$ are not Gaussian, the regression combines information from many such values. 
Since the estimators are linear combinations of independent (or weakly dependent) inputs with finite variance, they are approximately Gaussian by the Central Limit Theorem. 

Therefore, for sufficiently large window length $l$, the estimated features $(\mathbf{a}, \boldsymbol{\lambda})$ can be treated as approximately Gaussian variables.

Regarding the i.i.d. assumption of the noise, this trivially holds in the case of non-overlapping windows. 
However, when the windows overlap, the input data for consecutive Fourier transforms share common segments, which induces statistical dependence between the resulting spectral estimates. 
As a consequence, the noise affecting the estimated parameters $(\mathbf{a}, \boldsymbol{\lambda})$ is not independent across windows, but exhibits weak temporal correlation.
\end{proof}

\subsection*{(A2) Preservation of Linear Dependence under Spectral Transformation}

\begin{theorem}[Preservation of Linear Dependence under Spectral Transformation]
\label{the:lineardep}
Let $\mathbf{x}_i$ and $\mathbf{x}_j$ be two power-law time series and let $\mathcal{T}$ denote the transformation described in~\Cref{alg:placy}.

Then, if $\mathbf{x}_j = \alpha \cdot \mathbf{x}_i + \beta$ for some constants $\alpha,\beta \in \mathbb{R}$ and all $t$, then, for $\alpha \neq 0$,
\[
    \boldsymbol{\lambda}_j = \boldsymbol{\lambda}_i
    \qquad \text{and} \qquad
    \boldsymbol{a}_j = \boldsymbol{a}_i + \log |\alpha|.
\]

Therefore, the spectral transformation $\mathcal{T}$ preserves linear dependence between time series.
\end{theorem}

\begin{proof}
Suppose $\mathbf{x}_j = \alpha \cdot \mathbf{x}_i + \beta$. By linearity of the Fourier transform, for each frequency $f$ we have
\[
\mathcal{F}[\mathbf{x}_j](f)
=
\mathcal{F}[\alpha \mathbf{x}_i + \beta](f)
=
\alpha \mathcal{F}[\mathbf{x}_i](f) + \mathcal{F}[\beta](f).
\]
Since $\beta$ is constant in time, its Fourier transform is concentrated only at zero frequency, i.e.,
\[
\mathcal{F}[\beta](f) \propto \delta(f).
\]
However, in the PLaCy fitting procedure, the zero-frequency component is discarded and the power-law fit is performed only over frequencies $f>0$. Hence, for all frequencies used in the fit,
\[
\mathcal{F}[\mathbf{x}_j](f)
=
\alpha \mathcal{F}[\mathbf{x}_i](f).
\]
Therefore, for $\alpha \neq 0$,
\[
A_j(f)
=
|\mathcal{F}[\mathbf{x}_j](f)|
=
|\alpha \mathcal{F}[\mathbf{x}_i](f)|
=
|\alpha| A_i(f).
\]
Since
\[
A_i(f)=e^{a_i}f^{-\lambda_i},
\]
it follows that
\[
A_j(f)=|\alpha|e^{a_i}f^{-\lambda_i}
      =e^{a_i+\log|\alpha|}f^{-\lambda_i}.
\]
Taking logarithms gives
\[
\log A_j(f)
=
\log A_i(f)+\log|\alpha|,
\]
and therefore
\[
-\boldsymbol{\lambda}_j\log f+\boldsymbol{a}_j
=
-\boldsymbol{\lambda}_i\log f+\boldsymbol{a}_i+\log|\alpha|.
\]
Hence,
\[
\boldsymbol{\lambda}_j=\boldsymbol{\lambda}_i,
\qquad
\boldsymbol{a}_j=\boldsymbol{a}_i+\log|\alpha|.
\]

If $\alpha=0$, then $\mathbf{x}_j=\beta$ is a constant signal. Its Fourier transform is concentrated only at $f=0$, which is discarded before the power-law fitting procedure. Therefore, this degenerate case carries no nonzero-frequency spectral structure and does not affect the analysis.

Hence, $\mathcal{T}$ preserves linear dependence between the transformed spectral features.
\end{proof}

\subsection*{(B1) Spectral Causality Preservation}

\begin{theorem}[Preservation of Linear Causal Structure under Spectral Transformation]
\label{the:Spectralcausalitypr}
Let $\mathbf{x}_i$ and $\mathbf{x}_j$ be two power-law time series such that
\begin{equation}
\label{eq:eqthm6}
\mathbf{x}_i = g(\mathbf{x}_j) + \boldsymbol{\eta},
\end{equation}
where $g$ is a linear function and $\boldsymbol{\eta}=\{\eta(t)\}$ is additive noise independent of $\mathbf{x}_j$.
Assume further that, over the nonzero frequencies used in the power-law fit, the noise contribution is dominated by the causal signal, i.e.,
\[
\left|\mathcal{F}[\boldsymbol{\eta}](f)\right|
\ll
\left|\mathcal{F}[g(\mathbf{x}_j)](f)\right|,
\qquad f>0.
\]
Then, the spectral parameters $(\mathbf{a}_i,\boldsymbol{\lambda}_i)$ defined in~\Cref{alg:placy} retain information about the causal influence from $\mathbf{x}_j$ to $\mathbf{x}_i$.
\end{theorem}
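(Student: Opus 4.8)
The plan is to exploit the linearity of the DFT together with \Cref{the:lineardep} to show that any perturbation of the spectral parameters of $\mathbf{x}_j$ propagates, in expectation, into those of $\mathbf{x}_i$, so that the past of $(\mathbf{a}_j,\boldsymbol{\lambda}_j)$ remains informative for $(\mathbf{a}_i,\boldsymbol{\lambda}_i)$ --- which is exactly what the Granger test on the transformed series detects. Fix a window $W_k$ of length $l$. Since $\mathcal{T}$ acts locally and the DFT is linear, restricting \Cref{eq:eqthm6} to $W_k$ yields
\[
\phi_i^k(f) \;=\; \mathcal{F}\!\left[g(\mathbf{x}_j)\big|_{W_k}\right](f) \;+\; \phi_\eta^k(f),
\]
where $\phi_\eta^k$ is the DFT of the noise window and is independent of $\mathbf{x}_j$ by hypothesis. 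The first term is a deterministic function of $\mathbf{x}_j\big|_{W_k}$ alone, hence the amplitude $A_i^k(f)=|\phi_i^k(f)|$, and therefore the ordinary least squares estimates $(a_i^k,\lambda_i^k)$ obtained from the pairs $\{(\log f_m,\log A_i^k(f_m))\}_m$, are measurable functions of $\big(\mathbf{x}_j\big|_{W_k},\,\boldsymbol{\eta}\big|_{W_k}\big)$ with statistically independent arguments.

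First I would handle the linear case $g(\mathbf{x}_j)(t)=c\,\mathbf{x}_j(t-\tau)$. A time shift contributes only the phase factor $e^{-i2\pi f\tau}$, which leaves magnitudes unchanged, so up to boundary terms that vanish as $l\to\infty$ one has $\big|\mathcal{F}[g(\mathbf{x}_j)|_{W_k}](f)\big| = |c|\,A_j^k(f)$. Setting $\zeta_k(f):=\phi_\eta^k(f)/\big(c\,\phi_j^k(f)\big)$, which is independent of $\mathbf{x}_j$, gives
\[
\log A_i^k(f) \;=\; \log|c| \;+\; \log A_j^k(f) \;+\; \log\big|\,1+\zeta_k(f)\,\big|.
\]
Because the OLS slope and intercept are linear functionals of the response vector, applying the fit of \Cref{alg:placy} and invoking \Cref{the:lineardep} on the deterministic part, the fitted slope obeys $\lambda_i^k = \lambda_j^k + \xi_k$ and the intercept $a_i^k = a_j^k + \log|c| + \xi_k'$, where $(\xi_k,\xi_k')$ are the OLS coefficients of the noise term $\log|1+\zeta_k|$ and are independent of $\{\lambda_j^m\}_m$. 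Hence $\mathbb{E}\!\left[\lambda_i^k \mid \mathbf{x}_j|_{W_k}\right]$ is an affine function of $\lambda_j^k$, so $\boldsymbol{\lambda}_j$ --- and, through the lag $\tau$ and the overlap of consecutive windows, its past --- carries predictive information about $\boldsymbol{\lambda}_i$ beyond the past of $\boldsymbol{\lambda}_i$ itself, which is exactly the condition under which the multivariate Granger test of \Cref{alg:placy} flags the edge $j\to i$.

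For a general nonlinear $g$, I would argue via statistical dependence rather than exact identities. Expanding $g$ in a Volterra/polynomial series $g(\mathbf{x}_j)=\sum_{n}c_n\mathbf{x}_j^n$, each power $\mathbf{x}_j^n$ has windowed spectrum equal to an $n$-fold self-convolution of $\phi_j^k$, so $A_i^k(f)$ is still a deterministic (nonlinear) function of $\mathbf{x}_j\big|_{W_k}$ plus the independent noise; as long as this function is not $\mathbf{x}_j$-constant, the mutual information $I\big((a_i^k,\lambda_i^k);(a_j^k,\lambda_j^k)\big)$ is strictly positive, i.e.\ the causal dependence survives $\mathcal{T}$. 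Combined with \Cref{the:lineardep} (no spurious edges), and \Cref{the:MSF}, \Cref{the:GaussianStationary}, \Cref{the:statundernonstatcondition} (the VAR hypotheses), this yields that the Granger analysis on $(\mathbf{a},\boldsymbol{\lambda})$ recovers $\mathcal{G}^*$, closing step \textbf{(B1)} and the proof of \Cref{thm:spectral-invariance}. The main obstacle is precisely this nonlinear step: since $|u+v|\neq|u|+|v|$ in general, the clean decomposition into a signal term plus an independent log-noise term is exact only in the linear (or dominant-signal) regime, and for arbitrary $g$ one must settle for a mutual-information statement, possibly under a mild signal-to-noise condition, rather than an exact affine relation between the two sets of spectral parameters.
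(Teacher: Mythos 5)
Your proposal follows essentially the same route as the paper's own proof: apply the (windowed) DFT to \Cref{eq:eqthm6}, use linearity to split the spectrum of $\mathbf{x}_i$ into a term driven by $\mathbf{x}_j$ plus a noise term, and then treat the linear and nonlinear cases of $g$ separately, concluding that the fitted slope $\boldsymbol{\lambda}_i$ inherits information from $\boldsymbol{\lambda}_j$. In fact your write-up is more careful than the paper's in several places: you handle the time shift via the unit-modulus phase factor, you keep the analysis per window, and you explicitly flag the point the paper silently elides, namely that $|u+v|\neq|u|+|v|$ (the paper writes $A_i(f)=|g(A_j(f))|+\mathcal{F}[\boldsymbol{\eta}]$, which commits exactly this error); your Volterra/mutual-information fallback for nonlinear $g$ is also a more defensible version of the paper's appeal to ``orthogonality and completeness of the Fourier basis.'' One slip worth fixing: $\zeta_k(f)=\phi_\eta^k(f)/\bigl(c\,\phi_j^k(f)\bigr)$ is \emph{not} independent of $\mathbf{x}_j$, since $\phi_j^k$ sits in its denominator, so the claim that the OLS corrections $(\xi_k,\xi_k')$ are independent of $\{\lambda_j^m\}_m$ does not follow as stated; it holds only approximately in the high signal-to-noise regime you already invoke in your closing paragraph, which is the same regime the paper implicitly assumes. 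Given that the theorem's conclusion is the qualitative statement that causal information is ``retained,'' your argument establishes it at least as rigorously as the paper's does.
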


\begin{proof}
Applying the Fourier transform to both sides of~\Cref{eq:eqthm6} and exploiting the linearity of the Fourier transform, we get
\[
\mathcal{F}[\mathbf{x}_i](f)
=
\mathcal{F}[g(\mathbf{x}_j)](f)
+
\mathcal{F}[\boldsymbol{\eta}](f).
\]
Taking the modulus does not, in general, distribute over sums. However, by the assumed signal-dominance condition, for the nonzero frequencies used in the fit we have
\[
A_i(f)
=
\left|
\mathcal{F}[g(\mathbf{x}_j)](f)
+
\mathcal{F}[\boldsymbol{\eta}](f)
\right|
\approx
\left|
\mathcal{F}[g(\mathbf{x}_j)](f)
\right|.
\]
Since $g$ is linear, the causal component preserves the power-law structure of the parent process up to the spectral effect induced by $g$. Therefore, if
\[
A_j(f) \propto f^{-\boldsymbol{\lambda}_j},
\]
then the causal component in $A_i(f)$ also carries a power-law contribution induced by $\mathbf{x}_j$. Consequently, the log-log fit of $A_i(f)$ produces spectral parameters $(\mathbf{a}_i,\boldsymbol{\lambda}_i)$ that retain information about the causal transformation from $\mathbf{x}_j$ to $\mathbf{x}_i$.

\end{proof}

\underline{Nonlinear causal relationship $g$.}
For nonlinear $g$, $\mathcal{F}[g(\mathbf{x}_j)] \neq g(\mathcal{F}[\mathbf{x}_j])$ in general. 
However, due to the orthogonality and completeness of the Fourier basis, the operation $g$ might induce structured interactions among frequencies. 
This might distort without destroying  the underlying spectral shape. 
In practice, we found experimentally that even under nonlinear transformations the global decay behavior captured by the spectral parameters $\boldsymbol{\lambda}$ typically remains a meaningful summary of the causal influence, as suggested by the experiments of Appendix \ref{sec:non_linear}.

\section{\texorpdfstring{Mathematical computation of $\mathbf{\mathbf{\lambda}}$ for our Synthetic Datasets}{Mathematical computation of lambda for our Synthetic Data}
}
\label{sec:derivlambda}

\label{subsec:freqtotimcausality}

In this section, we provide a mathematical derivation of the spectral parameter $\mathbf{\lambda}$ for the linear additive system used to generate the synthetic datasets in~\Cref{sec:SyntheticScenarios}. 

Let $\mathbf{x}_1$, $\mathbf{x}_2$, and $\mathbf{x}_3$ be three time series whose spectral amplitudes follow a power-law profile. 
In the case of an additive interaction in the time domain, the resulting spectral amplitude of $\mathbf{x}_1$ can be expressed—under idealized linearity and independence assumptions—as:
\[
 \forall f \quad A_1(f) = A_2(f) + c \cdot A_3(f)
\]
where $c$ is a scalar coefficient regulating the strength of the contribution from $\mathbf{x}_3$ to $\mathbf{x}_1$, and $f$ the value of a frequency of the spectrum.

\[
A_1(f) = e^{\mathbf{a}_1} \cdot f^{-\boldsymbol{\lambda}_1}, \quad
A_2(f) = e^{\mathbf{a}_2} \cdot f^{-\boldsymbol{\lambda}_2}, \quad
A_3(f) = e^{\mathbf{a}_3} \cdot f^{-\boldsymbol{\lambda}_3}
\]
Substituting into the first equation gives:
\begin{equation}
\label{eq:eq_with_exp}
e^{\mathbf{a}_1} \cdot f^{-\boldsymbol{\lambda}_1} = e^{\mathbf{a}_2} \cdot f^{-\boldsymbol{\lambda}_2} + c \cdot e^{\mathbf{a}_3} \cdot f^{-\boldsymbol{\lambda}_3}
\end{equation}

\paragraph{Case 1:}
Assume the following:
\begin{equation}
    \label{eq:y_dominates}
    \frac{c \cdot e^{\mathbf{a}_3}}{e^{\mathbf{a}_2}} \cdot f^{\boldsymbol{\lambda}_2 - \boldsymbol{\lambda}_3} \gg 1.
\end{equation}

Notice that we can write the right-hand side of~\Cref{eq:eq_with_exp} as $e^{\mathbf{a}_2} \cdot f^{-\boldsymbol{\lambda}_2} \left[1 + \frac{c \cdot e^{\mathbf{a}_3}}{e^{\mathbf{a}_2}} \cdot f^{\boldsymbol{\lambda}_2 - \boldsymbol{\lambda}_3}\right]$. 
By applying this substitution, taking the natural logarithm in ~\Cref{eq:eq_with_exp}, and using logarithmic properties, we get:

\begin{align}
\label{eq:long_eq}
\log \left(e^{\mathbf{a}_1} \cdot f^{-\boldsymbol{\lambda}_1}\right)
&= \log \left(e^{\mathbf{a}_2} \cdot f^{-\boldsymbol{\lambda}_2} \left[1 + \frac{c \cdot e^{\mathbf{a}_3}}{e^{\mathbf{a}_2}} \cdot f^{\boldsymbol{\lambda}_2 - \boldsymbol{\lambda}_3}\right]\right)\\
&= \mathbf{a}_2 - \boldsymbol{\lambda}_2 \log f + \log \left(1 + \frac{c \cdot e^{\mathbf{a}_3}}{e^{\mathbf{a}_2}} \cdot f^{\boldsymbol{\lambda}_2 - \boldsymbol{\lambda}_3}\right)
\end{align}

Given the assumption in~\Cref{eq:y_dominates}, the logarithmic term in the previous equation can be approximated as:

\[
\log \left(1 + \frac{c \cdot e^{\mathbf{a}_3}}{e^{\mathbf{a}_2}} \cdot f^{\boldsymbol{\lambda}_2 - \boldsymbol{\lambda}_3}\right)
\approx \log \left(\frac{c \cdot e^{\mathbf{a}_3}}{e^{\mathbf{a}_2}} \cdot f^{\boldsymbol{\lambda}_2 - \boldsymbol{\lambda}_3}\right)
\]

By substituting this expression in~\Cref{eq:long_eq}, we obtain the following equation:
\[
\mathbf{a}_1 - \boldsymbol{\lambda}_1 \log f \approx \mathbf{a}_2 - \boldsymbol{\lambda}_2 \log f + \log \left(\frac{c \cdot e^{\mathbf{a}_3}}{e^{\mathbf{a}_2}}\right) + (\boldsymbol{\lambda}_2 - \boldsymbol{\lambda}_3) \log f
\]

Which can be simplified as follows:

\[
\mathbf{a}_1 - \boldsymbol{\lambda}_1 \log f \approx \log c + \mathbf{a}_3 - \boldsymbol{\lambda}_3 \log f
\]

Finally, by solving for \( \boldsymbol{\lambda}_1 \), we get the following expression for $\boldsymbol{\lambda}_1$:

\[
\boldsymbol{\lambda}_1 \approx \boldsymbol{\lambda}_3 + \frac{\log c + \mathbf{a}_3 -\mathbf{a}_1}{\log f}
\]

This shows that the spectral decay parameter \( \boldsymbol{\lambda}_1 \) approximates \( \boldsymbol{\lambda}_3 \) with a correction term depending on the scaling factor \( c \), the spectral amplitude, and frequency.

\paragraph{Case 2:}
Now consider the case where \( \frac{c \cdot e^{\mathbf{a}_3}}{e^{\mathbf{a}_2}} \cdot f^{\boldsymbol{\lambda}_2 - \boldsymbol{\lambda}_3} \ll 1 \), such as when \( c \to 0 \). 
The logarithmic term is approximated using the first-order Taylor expansion:

\[
\log \left(1 + \frac{c \cdot e^{\mathbf{a}_3}}{e^{\mathbf{a}_2}} \cdot f^{\boldsymbol{\lambda}_2 - \boldsymbol{\lambda}_3}\right)
\approx \frac{c \cdot e^{\mathbf{a}_3}}{e^{\mathbf{a}_2}} \cdot f^{\boldsymbol{\lambda}_2 - \boldsymbol{\lambda}_3}
\]

Substituting into the main expression:

\[
\mathbf{a}_1 - \boldsymbol{\lambda}_1 \log f \approx \mathbf{a}_2 - \boldsymbol{\lambda}_2 \log f + \frac{c \cdot e^{\mathbf{a}_3}}{e^{\mathbf{a}_2}} \cdot f^{\boldsymbol{\lambda}_2 - \boldsymbol{\lambda}_3}
\]

Solving for \( \boldsymbol{\lambda}_1 \):

\[
\boldsymbol{\lambda}_1 \approx \boldsymbol{\lambda}_2 + \frac{\mathbf{a}_2-\mathbf{a}_1}{\log f}
+ c \cdot e^{\mathbf{a}_3-\mathbf{a}_2} \cdot \frac{f^{\boldsymbol{\lambda}_2 - \boldsymbol{\lambda}_3}}{\log f}
\]

As \( c \to 0 \), the correction vanishes, and \( \boldsymbol{\lambda}_1 \to \boldsymbol{\lambda}_2 \), showing that the spectrum of the target converges to that of the source.

\section{Discussion on the VAR Inputs}

From the expressions derived in~\Cref{sec:derivlambda}, it becomes evident that in the linear case one can identify certain quantities that are valuable inputs to a VAR model. 
As an example, given the two time series $\mathbf{x}_1$ and $\mathbf{x}_3$ of~\Cref{sec:derivlambda}, the relevant quantities are:
\begin{itemize}
\vspace{-.2cm}
    \item $\mathbf{a}_1$: spectral intercept of the caused time series
    \item $\mathbf{a}_3$: spectral intercept of the causing time series
    \item $\boldsymbol{\lambda}_1$: spectral slope of the caused time series
    \item $\boldsymbol{\lambda}_3$: spectral slope of the causing time series
\end{itemize}

In this analysis, we neglect higher-order terms (e.g., exponentials in $\mathbf{a}$), as they empirically fail to improve system performance and are not present in the asymptotic regime discussed in ~\Cref{sec:derivlambda}.

Each time series thus provides two spectral parameters: $\boldsymbol{\lambda}$ and $\mathbf{a}$. 
Among these, the $\boldsymbol{\lambda}$ parameters are identified as the main carriers of causal information. 
The simplest model we can consider involves assessing Granger causality between the $\boldsymbol{\lambda}$ sequences, i.e., testing for $\boldsymbol{\lambda}_3 \to \boldsymbol{\lambda}_1$ relationships.


To enhance the system’s robustness in the presence of stationary processes, including the $\mathbf{a}$ parameter as a covariate in the VAR model has shown some performance improvements. 
However, in non-stationary settings, adding the $\mathbf{a}$ provides no additional causal information beyond $\boldsymbol{\lambda}$.

Finally, using the $\mathbf{a}$ parameter of the caused series either as an input or as a target in the VAR model has proven ineffective: while it does not meaningfully improve performance in the stationary case, it introduces false positives in non-stationary regimes. 
Therefore, its inclusion is not theoretically nor empirically justified.

\subsection{\texorpdfstring{Study of $p$-values for Setting Window Length and Stride}{Study of p-values for Setting Window Length and Stride}}

\label{appendix:windowpvalue}
In the proposed framework, both the stride and the sliding window length are configurable parameters that can be tuned by the user. 
The stride primarily serves to increase the number of available data points for analysis and plays a crucial role in the system’s ability to detect short-range causal dependencies. 
If the true causal lag is shorter than the selected stride, the system may fail to capture such dynamics. 
Therefore, it is generally recommended to set the stride to 1.
The sliding window length is a more delicate parameter, as it is influenced by multiple factors. 
A window that is too short may yield unreliable estimates due to limited data, whereas an excessively long window may blur or attenuate the causal signal, thus reducing detection sensitivity.
To mitigate this trade-off, we introduce a data-driven procedure to estimate a suitable window length. 
This involves a preliminary evaluation of the fitting procedure across various window sizes, based on the distribution of $p$-values over the entire spectrum.
The smallest window size that satisfies a $p$-value of $0.05$ is then selected for subsequent causal inference.

A reasonable requirement is also to fix a lower bound of this experimental criterion to a window length of a minimum of 50 datapoints. 
A smaller window may still satisfy the $p$-value requirement, but it will also make the fitted $\boldsymbol{\lambda}$ parameter too sensitive to endogenous variation of the autocorrelation caused by the phenomenon.

A study in function of stride and window length is shown in ~\Cref{fig:stridewindownanalysis}.

\begin{figure}[h!]
    \centering
    \begin{subfigure}[b]{0.48\textwidth}
        \centering
        \includegraphics[width=\textwidth]{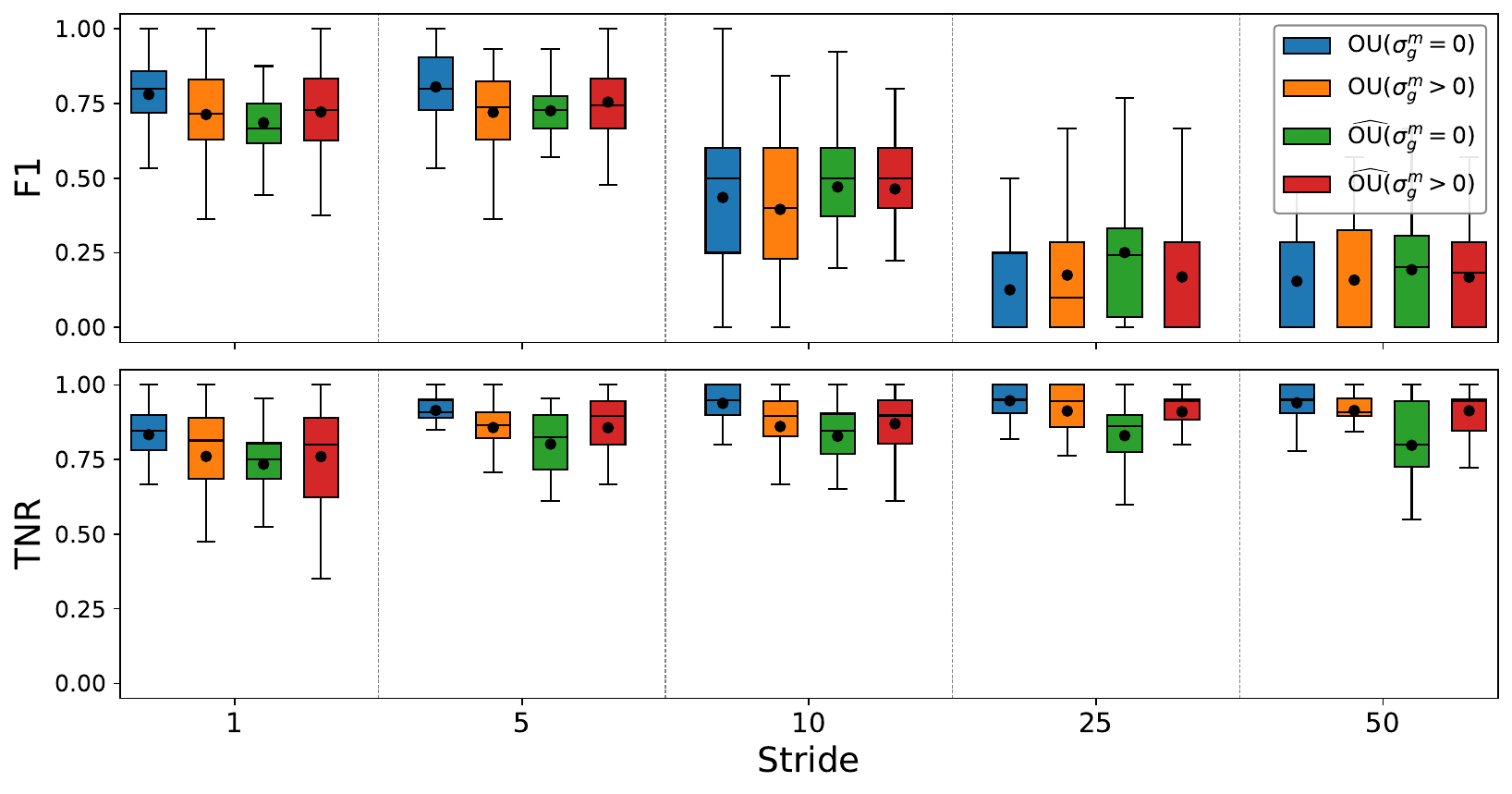}
        \caption{Plot $N=5$, $s_b=0$, window length = 50}
        \label{fig:stride}
    \end{subfigure}
    \hfill
    \begin{subfigure}[b]{0.48\textwidth}
        \centering
        \includegraphics[width=\textwidth]{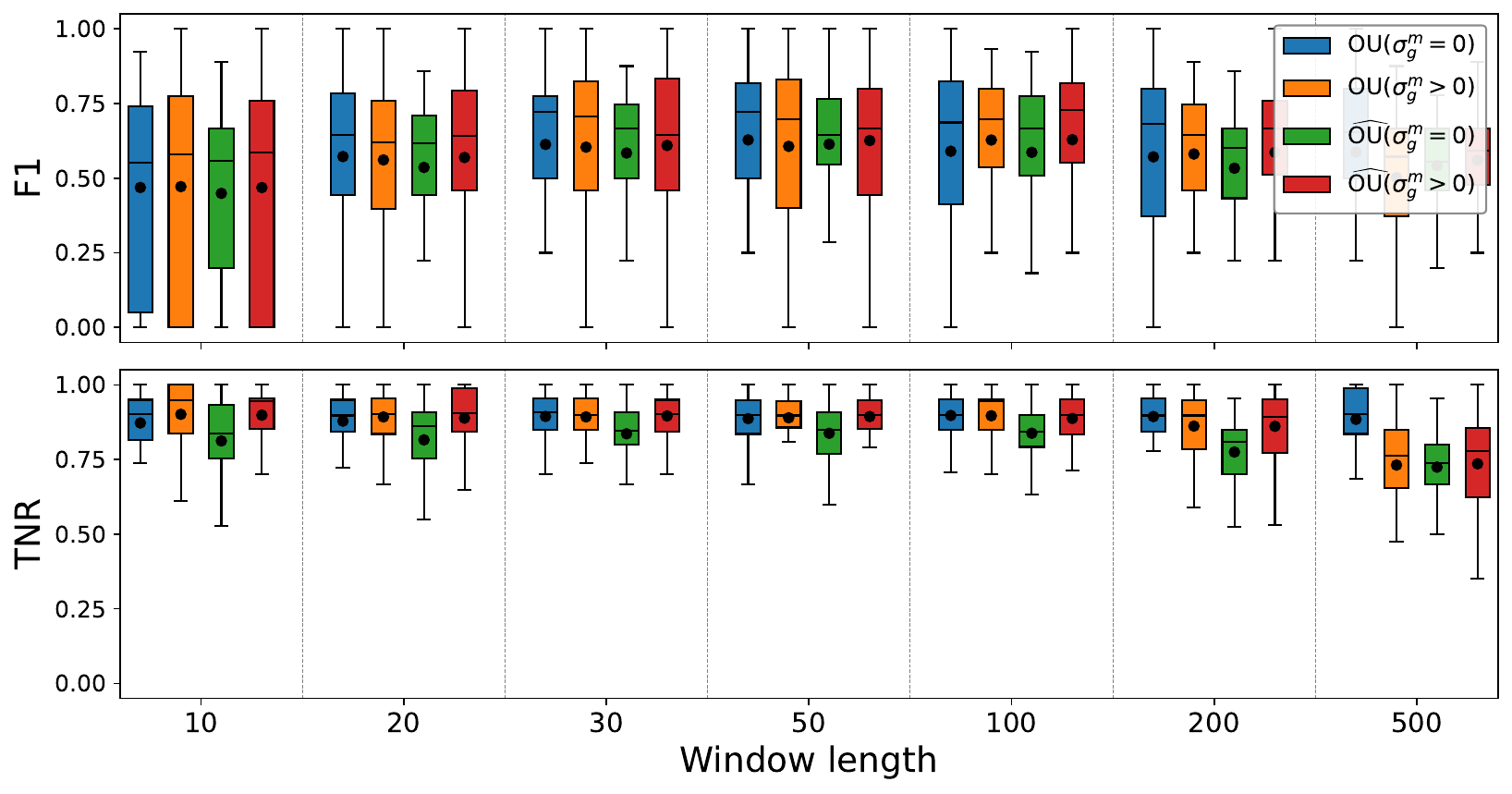}
        \caption{Plot $N=5$, $s_b=0$, stride = 1}
        \label{fig:windowlen}
    \end{subfigure}
    \caption{Stride and window length analysis.}
    \label{fig:stridewindownanalysis}
\end{figure}

\subsection{Slow-Varying Spectrum Systems}

There are cases in which the system is sampled at such a high rate that it exhibits minimal local spectral variation. 
This typically occurs when the white noise component is small compared to other system dynamics. 
In these scenarios, spectral analysis becomes challenging, as we rely on observing how the spectrum evolves over time to infer causal relationships. 

To address this issue, one should increase the window length and, if possible, reduce the stride of the sliding window. 
This allows the VAR model in the frequency domain to capture meaningful variations in the time series and better estimate potential causal links.

In these situations, time-domain methods like Granger causality may be more appropriate.
While this introduces a limitation, the selection of a suitable causal inference technique can be effectively informed by an initial spectral analysis.

\section{Further Experimental Details and Results}

\subsection{OU Processes}
\Cref{fig:ou_process} shows two examples of the simulated OU processes, and the related causal graphs.
In particular, an example of ${\rm{OU}}(\sigma_g^m = 0)$ is shown in~\Cref{fig:ou} and an example of ${\rm{OU}}(\sigma_g^m > 0)$ is shown in~\Cref{fig:ou_stocklike}.

\vskip 1 cm

\begin{figure}[h!]
    \centering
    \begin{subfigure}[b]{\textwidth}
        \centering
        \includegraphics[width=\textwidth]{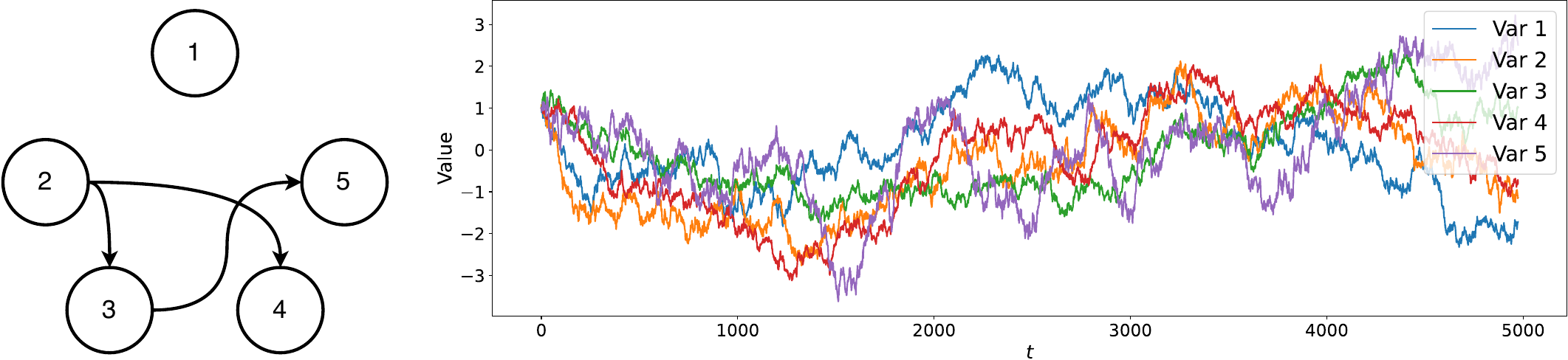}
        \caption{${\rm{OU}}(\sigma_g^m = 0)$}
        \label{fig:ou}
    \end{subfigure}
    \vspace{1em}
    \begin{subfigure}[b]{\textwidth}
        \centering
        \includegraphics[width=\textwidth]{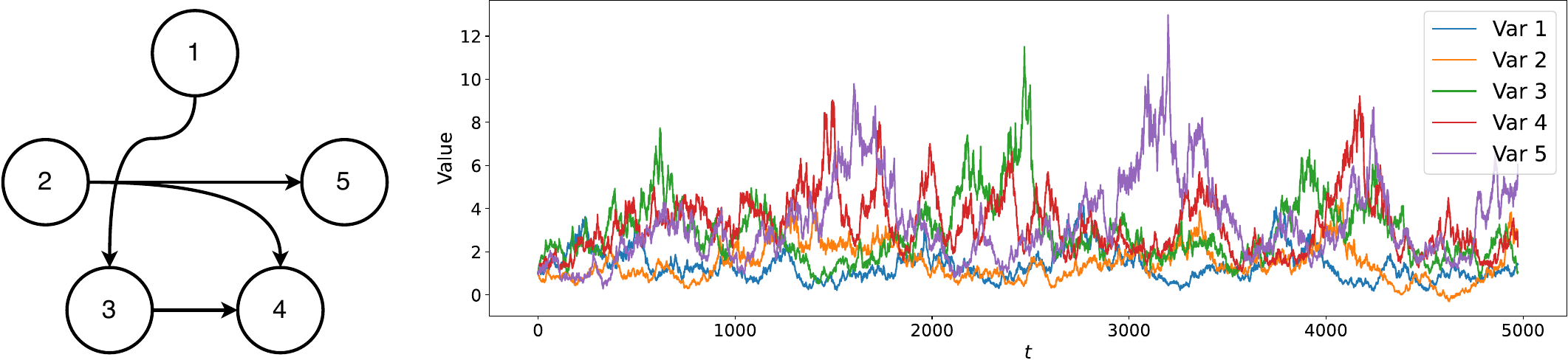}
        \caption{${\rm{OU}}(\sigma_g^m > 0)$}
        \label{fig:ou_stocklike}
    \end{subfigure}
    \caption{Generated synthetic processes.}
    \label{fig:ou_process}
\end{figure}

\subsection{Additional Results}

\Cref{tab:N5sigmag0.5,tab:N5sigmag1.0,tab:N10sigmag0.5,tab:N10sigmag1.0,tab:N5sigmag0.5TNR,tab:N5sigmag1.0TNR,tab:N10sigmag0.5TNR,tab:N10sigmag1.0TNR}
present all the experimental results for different configurations. 
We analyze the impact of varying the number of variables ($N \in \{5, 10\}$) and the scaling factor $\sigma_g^a \in \{0.5, 1\}$ on the methods' performance. In bold, we highlight the results of the best-performing algorithm in each scenario for a given estimator. Since some algorithms mainly detect causal relationships, their $TNR$ values tend to be high. In the $TNR$ tables, we also highlighted in green the best $TNR$ experiments that achieved an F1 score of at least 0.5. These experiments clearly demonstrate that PLaCy is the most reliable algorithm, achieving the highest F1 score in most scenarios and a competitive or even superior TNR among all methods analyzed. Please note that we omit the results of RHINO and CCM in the tables: as explained in Section~\ref{sec:ex_times}, their prohibitive execution times made it impossible to run a sufficient number of experiments. 
In fact, Rhino lacks cross‑configuration generalization along sequence length, noise level, and causal strength, necessitating 192\footnote{Our tested configurations include 4 OU processes, 4 values for $\sigma_b$, 2 values for $\sigma_g$, 3 values for $C$, and 2 for $N$.} distinct trainings to cover our experimental campaign. Combined with markedly greater runtime (see Section~\ref{sec:ex_times}) and lower F1 and TNR in the results of Section~\ref{sec:experiments}, this yields an unfavorable accuracy–efficiency trade‑off. For these reasons, we omit it from the additional experiments reported in this section. A similar reasoning applies to CCM, for which, using the mean execution time reported in Section~\ref{sec:ex_times}, the total computation would amount to approximately one month in total.

Partial results on these two methods did not considerably differ from those reported in Section~\ref{sec:experiments}.

\begin{sidewaystable}
\centering
\scriptsize
\caption{F1 Score - $N=5, \ \sigma_g^a=0.5$}
\label{tab:N5sigmag0.5}

}
\end{table}

\newpage
\subsection{PCMCI in Frequency}
We observed that performing causal discovery on the $(\mathbf{a}, \boldsymbol{\lambda})$ parameters leads to significant performance improvements for other CD algorithms.
For instance, regarding the \textbf{PCMCI} algorithm,~\Cref{tab:improv_N5sigmag1.0}
shows that instead of applying the algorithm directly to the time-domain data, running it on the spectral parameters results in higher F1 scores with a small reduction in TNR in some cases.
These results suggest that our preprocessing approach can be beneficial also when applied to other causal discovery paradigms, paving the way for future works in this direction.
\vspace{2cm}

\begin{table}[h!]
\renewcommand{\arraystretch}{2}\centering
\caption{Performance Improvement - $N=5, \ \sigma_g^a=1.0$}
\label{tab:improv_N5sigmag1.0}
\begin{tabular}{cccccc}
\toprule
 &  & \multicolumn{2}{c}{\textbf{Granger}} & \multicolumn{2}{c}{\textbf{PCMCI}} \\
 &  & \textbf{F1} & \textbf{TNR} & \textbf{F1} & \textbf{TNR} \\
 & \textbf{C} &  &  &  &  \\
\midrule
\multirow{3}{*}{\rotatebox{90}{${\rm{OU}}(\sigma_g^m = 0)$}} & 0.2 & -0.44$\pm$0.32 & +0.21$\pm$0.19 & +0.02$\pm$0.32 & -0.03$\pm$0.07 \\
 & 0.5 & +0.16$\pm$0.26 & +0.21$\pm$0.20 & +0.38$\pm$0.44 & -0.03$\pm$0.07 \\
 & 1.0 & +0.23$\pm$0.26 & +0.21$\pm$0.21 & +0.46$\pm$0.34 & -0.03$\pm$0.08 \\
\cmidrule(lr){1-6}
\multirow{3}{*}{\rotatebox{90}{$\widehat{{\rm{OU}}}(\sigma_g^m = 0)$}} & 0.2 & -0.22$\pm$0.35 & +0.38$\pm$0.30 & +0.13$\pm$0.39 & -0.05$\pm$0.10 \\
 & 0.5 & +0.24$\pm$0.27 & +0.38$\pm$0.29 & +0.37$\pm$0.38 & -0.06$\pm$0.09 \\
 & 1.0 & +0.25$\pm$0.26 & +0.34$\pm$0.29 & +0.42$\pm$0.34 & -0.05$\pm$0.09 \\
\cmidrule(lr){1-6}
\multirow{3}{*}{\rotatebox{90}{${\rm{OU}}(\sigma_g^m > 0)$}} & 0.2 & +0.09$\pm$0.28 & +0.09$\pm$0.10 & +0.38$\pm$0.37 & -0.01$\pm$0.05 \\
 & 0.5 & +0.18$\pm$0.20 & +0.11$\pm$0.12 & +0.43$\pm$0.35 & -0.01$\pm$0.05 \\
 & 1.0 & +0.20$\pm$0.20 & +0.13$\pm$0.14 & +0.27$\pm$0.31 & -0.00$\pm$0.05 \\
\cmidrule(lr){1-6}
\multirow{3}{*}{\rotatebox{90}{$\widehat{{\rm{OU}}}(\sigma_g^m > 0)$}} & 0.2 & +0.36$\pm$0.25 & +0.41$\pm$0.24 & +0.35$\pm$0.35 & +0.08$\pm$0.10 \\
 & 0.5 & +0.41$\pm$0.21 & +0.41$\pm$0.23 & +0.39$\pm$0.27 & +0.08$\pm$0.10 \\
 & 1.0 & +0.39$\pm$0.22 & +0.40$\pm$0.24 & +0.31$\pm$0.25 & +0.08$\pm$0.11 \\
\bottomrule
\end{tabular}
\end{table}
\newpage


\subsection{Non linear system analysis}\label{sec:non_linear}

Some additional experiments have been conducted in cases where the underlying process wasn't linear in order to test the stability of this algorithm in more challenging scenarios. The equation that was used to generate the process in these experiments is \Cref{eq:ouprocesspolinomial}

\begin{equation}
x(t + \Delta t) = x(t) + \frac{\Delta t}{\tau_c} \big(\mu - x(t)\big)^2 + \big( \sigma_b \epsilon_b(t) + \sigma_g^a \epsilon_g^a(t) + \sigma_g^m \epsilon_g^m(t) \cdot x(t)\big)\sqrt{\Delta t},
\label{eq:ouprocesspolinomial}
\end{equation}

Results of this experiments are reported in figure \Cref{fig:nonlinearouexperiments}

\begin{figure}[ht!]
    \centering
    \caption{Non linear process. $N = 5$, $C = 1.0$, $s_g = 1.0$, $s_b = 1.0$.}
    \vspace{0.5cm} \includegraphics[width=0.7\linewidth]{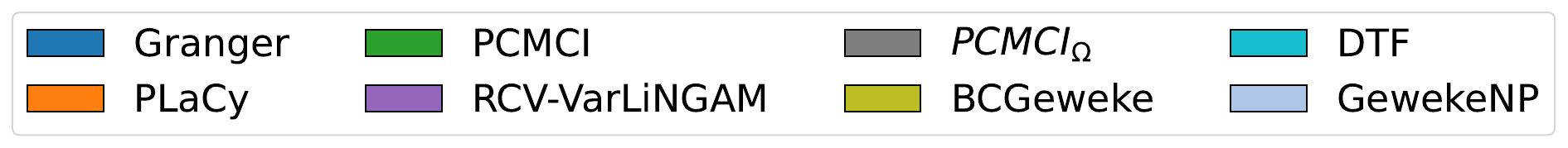}
    \centering
        \vspace{0.1cm} \includegraphics[width=0.8\linewidth]{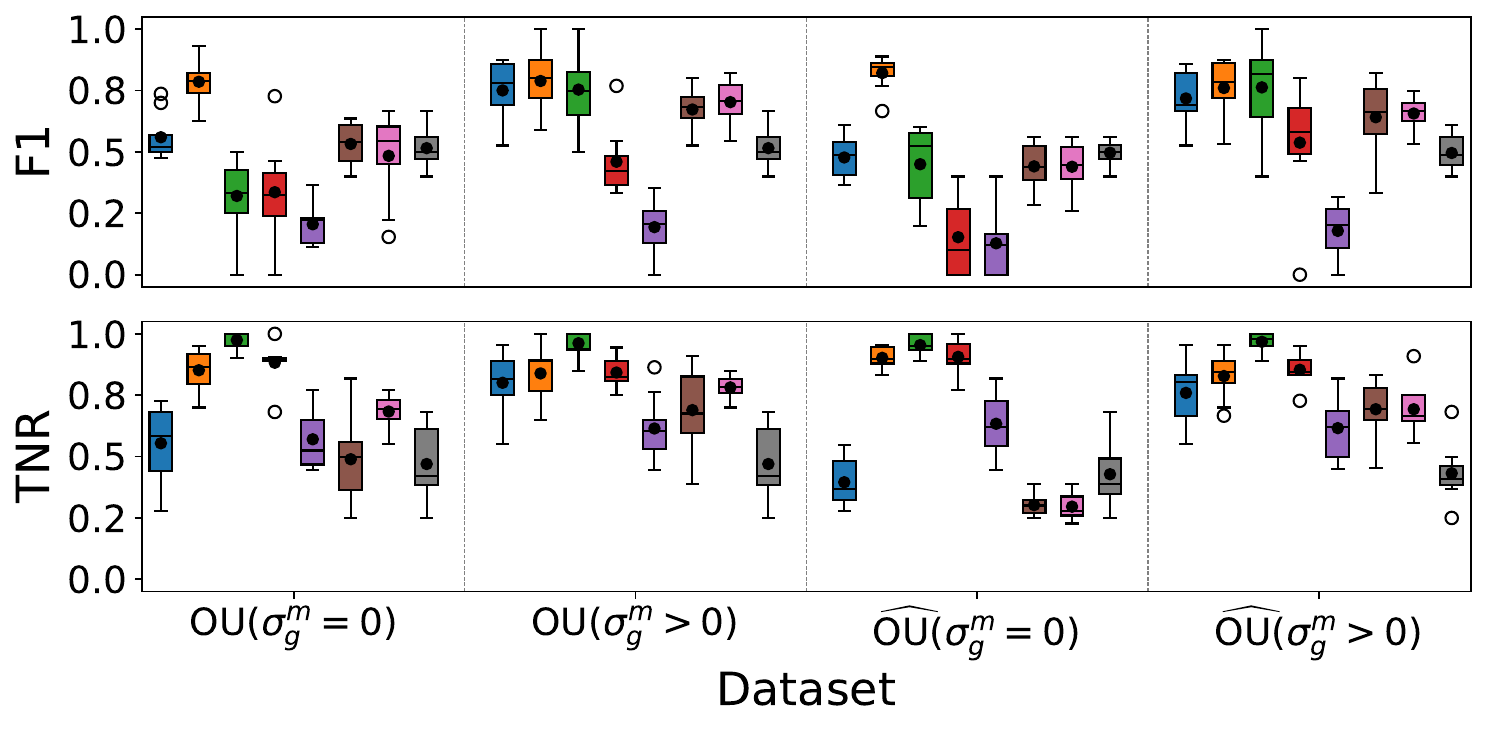}

    \label{fig:nonlinearouexperiments}
\end{figure}

\subsection{Negative Control Experiments}

We performed additional numerical experiments and evaluated the Petersen test \cite{pmlr-v286-petersen25a}  on the datasets of~\Cref{tab:N5sigmag1.0_short_f1_tnr} and \Cref{tab:realdataresults}, as well as on the datasets of~\Cref{tab:N5sigmag1.0_short_f1_tnr} with $N$ increased from $5$ to $10$.
Across all settings, the negative-control baseline yields consistently lower F1 and TNR scores, confirming the effectiveness of our approach. We also remark that the $p$-values returned by the tests were always statistically significant. Results can be found in \Cref{tab:placy_petersen}

\begin{table}[t]
\centering
\vspace{0.5cm}
\caption{Comparison of PLaCy with the Petersen test across synthetic and real-world datasets. Results are reported as mean $\pm$ standard deviation over multiple runs with $\sigma_g=0.5$: $C=0.5$ $\sigma_b=0.5$.}

\begin{tabular}{c l c c c c}
\toprule
& Dataset / Condition 
& F1 (PLaCy) 
& TNR (PLaCy) 
& F1 (Petersen test) 
& TNR (Petersen test) \\
\midrule

\multirow{4}{*}{\rotatebox{90}{\scriptsize $N=5$}}
& ${\rm{OU}}(\sigma_g^m = 0)$
& $0.58 \pm 0.27$ & $0.93 \pm 0.06$ & $0.11 \pm 0.17$ & $0.83 \pm 0.11$ \\

& $\widehat{{\rm{OU}}}(\sigma_g^m = 0)$
& $0.57 \pm 0.25$ & $0.89 \pm 0.08$ & $0.15 \pm 0.19$ & $0.79 \pm 0.10$ \\

& ${\rm{OU}}(\sigma_g^m = 0.5)$
& $0.77 \pm 0.20$ & $0.92 \pm 0.08$ & $0.18 \pm 0.22$ & $0.79 \pm 0.11$ \\

& $\widehat{{\rm{OU}}}(\sigma_g^m = 0.5)$
& $0.78 \pm 0.20$ & $0.92 \pm 0.08$ & $0.16 \pm 0.19$ & $0.79 \pm 0.11$ \\

\midrule

\multirow{4}{*}{\rotatebox{90}{\scriptsize $N=10$}}
& ${\rm{OU}}(\sigma_g^m = 0.0)$
& $0.63 \pm 0.10$ & $0.92 \pm 0.04$ & $0.15 \pm 0.10$ & $0.82 \pm 0.06$ \\

& $\widehat{{\rm{OU}}}(\sigma_g^m = 0.0)$
& $0.57 \pm 0.09$ & $0.86 \pm 0.06$ & $0.17 \pm 0.09$ & $0.76 \pm 0.07$ \\

& ${\rm{OU}}(\sigma_g^m = 0.5)$
& $0.69 \pm 0.08$ & $0.86 \pm 0.07$ & $0.19 \pm 0.09$ & $0.74 \pm 0.08$ \\

& $\widehat{{\rm{OU}}}(\sigma_g^m = 0.5)$
& $0.70 \pm 0.09$ & $0.86 \pm 0.08$ & $0.19 \pm 0.09$ & $0.73 \pm 0.09$ \\

\midrule

& Rivers
& $0.51 \pm 0.10$ & $0.68 \pm 0.16$ & $0.26 \pm 0.08$ & $0.61 \pm 0.15$ \\

& AirQuality
& $0.45 \pm 0.04$ & $0.65 \pm 0.07$ & $0.32 \pm 0.03$ & $0.59 \pm 0.08$ \\

\bottomrule
\end{tabular}
\label{tab:placy_petersen}
\end{table}

Additionally, we note that the Petersen test can vary simply due to the number of edges reconstructed by each algorithm. To account for this effect, we performed an alternative evaluation in which the Petersen test is computed using the true number of edges in the underlying causal graph, rather than the number predicted by each method.
This alternative evaluation does not depend on the specific algorithm under consideration and therefore yields a single value for each graph size. The corresponding results are reported below in \Cref{tab:placy_petersen_2} and again demonstrate that our approach is substantially more robust than a random guess.

\begin{table}[h!]
\centering
\vspace{0.5cm}
\caption{Modified Petersen test using the true number of edges.}
\begin{tabular}{lcc}
\toprule
Dataset &
Mean F1 &
Mean TNR \\
\midrule
$N=5$ & $0.15 \pm 0.20$ & $0.85 \pm 0.08$ \\
$N=10$ & $0.16 \pm 0.10$ & $0.84 \pm 0.04$ \\
Rivers & $0.16 \pm 0.15$ & $0.83 \pm 0.03$ \\
AirQuality & $0.24 \pm 0.02$ & $0.74 \pm 0.01$ \\
\bottomrule
\end{tabular}
\label{tab:placy_petersen_2}
\end{table}

\subsection{Acyclicity Assumption}
\noindent

Finally, while the synthetic graphs used for our controlled experiments were generated as DAGs, we do not necessarily assume acyclicity of the underlying graphs, nor do we impose this constraint on the recovered graphs. This design choice is intentional, as it allows the method to remain as general and broadly applicable as possible. However, we performed additional experiments to evaluate the behavior of PLaCy when acyclicity is explicitly enforced.

Specifically, we iteratively detect cycles in the recovered graph and remove, at each step, the single edge with the highest $p$-value (i.e., the lowest statistical significance), until no cycles remain. The results of this procedure are reported in ~\Cref{tab:acyclic}.

\begin{table}[h!]
\centering
\vspace{0.3cm}
\caption{Comparison between acyclic PLaCy and standard PLaCy for $N=5$; $\sigma_g=0.5$; $C=0.5$; $\sigma_b=0.5$ .}
\label{tab:acyclic}
\begin{tabular}{lcccc}
\toprule
Dataset / Condition &
Acyclic PLaCy (F1) &
Acyclic PLaCy (TNR) &
PLaCy (F1) &
PLaCy (TNR) \\
\midrule
${\rm{OU}}(\sigma_g^m = 0)$ 
& $0.65 \pm 0.19$ & $0.96 \pm 0.06$ & $0.58 \pm 0.27$ & $0.93 \pm 0.06$ \\

$\widehat{{\rm{OU}}}(\sigma_g^m = 0)$ 
& $0.60 \pm 0.20$ & $0.90 \pm 0.06$ & $0.57 \pm 0.25$ & $0.89 \pm 0.08$ \\

${\rm{OU}}(\sigma_g^m = 0.5)$  
& $0.90 \pm 0.10$ & $0.96 \pm 0.04$ & $0.77 \pm 0.20$ & $0.92 \pm 0.08$ \\

$\widehat{{\rm{OU}}}(\sigma_g^m = 0.5)$ 
& $0.91 \pm 0.09$ & $0.94 \pm 0.06$ & $0.78 \pm 0.20$ & $0.92 \pm 0.08$ \\
\bottomrule
\end{tabular}
\end{table}

These results show that imposing acyclicity does not deteriorate the reconstructive performance of our method and, in several cases, even leads to a slight improvement

\subsection{Benjamini--Hochberg correction}
\label{app:fdr}

Causal discovery procedures involve multiple hypothesis tests, which can increase the risk of false positive edge detections, especially in high-dimensional settings. In the main experiments, we do not apply standard FDR correction~\cite{BenjaminiFDR}, to ensure comparability with the other methods and because the analysis focuses on individual link performance.

As a robustness check, we repeated the synthetic benchmark of Figure~\ref{fig:syn_res} using Benjamini--Hochberg false discovery rate correction on the $p$-values produced by PLaCy, standard Granger causality, and PCMCI. Tables~\ref{tab:fdr_f1} and~\ref{tab:fdr_tnr} report the resulting F1 and TNR scores under the same setting used in the main text ($C=0.5$, $s_b=0.5$, $n_{\mathrm{vars}}=5$, $s_g=1.0$).

These results further show that the advantage of the proposed algorithm remains stable and statistically relevant under multiple-testing control. Therefore, when causal discovery is evaluated at the graph level rather than on individual edges, FDR correction is recommended to control for spurious detections arising from multiple simultaneous tests.

\begin{table}[t]
\centering
\caption{F1 score with Benjamini--Hochberg false discovery rate correction.}
\label{tab:fdr_f1}
\begin{tabular}{lccc}
\toprule
Dataset & Granger & PLaCy & PCMCI \\
\midrule
$OU(\sigma_g^m = 0)$            & $0.576 \pm 0.083$ & $\mathbf{0.754 \pm 0.110}$ & $0.291 \pm 0.185$ \\
$\overline{OU}(\sigma_g^m = 0)$ & $0.560 \pm 0.088$ & $\mathbf{0.776 \pm 0.150}$ & $0.400 \pm 0.313$ \\
$OU(\sigma_g^m > 0)$            & $0.710 \pm 0.143$ & $\mathbf{0.855 \pm 0.120}$ & $0.459 \pm 0.337$ \\
$\overline{OU}(\sigma_g^m > 0)$ & $0.581 \pm 0.122$ & $\mathbf{0.853 \pm 0.061}$ & $0.674 \pm 0.259$ \\
\bottomrule
\end{tabular}
\end{table}

\begin{table}[t]
\centering
\caption{TNR score with Benjamini--Hochberg false discovery rate correction.}
\label{tab:fdr_tnr}
\begin{tabular}{lccc}
\toprule
Dataset & Granger & PLaCy & PCMCI \\
\midrule
$OU(\sigma_g^m = 0)$            & $0.569 \pm 0.116$ & $\mathbf{0.984 \pm 0.035}$ & $0.979 \pm 0.034$ \\
$\overline{OU}(\sigma_g^m = 0)$ & $0.558 \pm 0.046$ & $0.931 \pm 0.064$ & $\mathbf{0.974 \pm 0.026}$ \\
$OU(\sigma_g^m > 0)$            & $0.739 \pm 0.159$ & $0.947 \pm 0.054$ & $\mathbf{0.990 \pm 0.029}$ \\
$\overline{OU}(\sigma_g^m > 0)$ & $0.568 \pm 0.178$ & $0.934 \pm 0.043$ & $\mathbf{0.973 \pm 0.051}$ \\
\bottomrule
\end{tabular}
\end{table}

\subsection{Complexity analysis}

Let $N$ be the number of time series (nodes), $L$ the length of each time series, $l$ the window length used for the spectral analysis, and $s$ the stride of the sliding window.

\paragraph{Spectral preprocessing.}
Each time series is divided into overlapping windows of length $l$ and stride $s$. 
The number of windows generated from a single time series is therefore
\[
W = \left\lfloor \frac{L-l}{s} \right\rfloor + 1 = O\!\left(\frac{L}{s}\right).
\]
In each window, the algorithm performs: (i) an FFT of size $l$ 
with cost $O(l \log l)$, (ii) a log--log transformation of amplitudes and 
frequencies ($O(l)$), and (iii) a linear regression to estimate the 
spectral coefficients $(a, \lambda)$ ($O(l)$).  
The FFT dominates the per-window computation, so the cost per window is
\[
O(l \log l).
\]
Hence, the cost for a single time series is
\[
O\!\left(\frac{L}{s}\right) \cdot O(l \log l)
    = O\!\left(\frac{L}{s} \, l \log l\right),
\]
and for all $N$ time series:
\[
O\!\left(N \cdot \frac{L}{s} \, l \log l\right).
\]
Since the stride $s$ is typically very small (often $s=1$), we may simplify this as
\[
O(N L \ l \log l).
\]

After the spectral preprocessing, the method performs pairwise Granger 
causality tests across all ordered pairs of the $N$ variables, resulting in 
$N(N-1)=O(N^{2})$ tests.  
A single Granger test on two time series of length $L$ (with fixed lag order) 
requires solving a small linear regression and therefore costs $O(L)$.  
The overall cost of this step is thus
\[
O(N^{2} L).
\]

Combining both contributions, the total computational complexity of the method is
\[
O(N L \ l \log l) \;+\; O(N^{2} L).
\]

\subsection{Execution Times}
\label{sec:ex_times}
The experiments were conducted on a high-performance computing cluster comprising 50 DELL EMC PowerEdge R7425 servers, each equipped with dual AMD EPYC 7301 processors (32 cores total per node at 2.2GHz).
Among the nodes, 19 are enhanced with NVIDIA Quadro RTX 6000 GPUs (24GB VRAM). These GPUs were used only for running \textbf{Rhino}, as this method requires the training of deep neural networks. As such, for $N=5$, the execution of \textbf{Rhino} took $\sim 4$ hours.

\vspace{1cm}

\begin{table}[h!]
\centering
\caption{Elapsed time for a single experiment (seconds).}
\label{tab:elapsed_times}
\begin{tabular}{lcc}
\toprule
\textbf{Method} & \textbf{N = 5} & \textbf{N = 10} \\
\midrule
Granger & 0.16$\pm$0.01 & 0.71$\pm$0.00 \\
PLaCy & 5.80$\pm$0.01 & 11.94$\pm$0.03 \\
PCMCI & 1.05$\pm$0.00 & 4.86$\pm$0.03 \\
CCM-Filtering & 426.35$\pm$1.57 & 1924.20$\pm$19.32 \\
RCV-VarLiNGAM & 1.98$\pm$0.10 & 7.03$\pm$0.27 \\
DYNOTEARS & 0.00$\pm$0.00 & 0.00$\pm$0.00 \\
PCMCI$\omega$ & 7.06$\pm$0.04 & 44.42$\pm$0.13 \\
BCGeweke & 2.45$\pm$0.10 & 4.63$\pm$0.26 \\
DTF & 14.19$\pm$1.33 & 93.10$\pm$2.01 \\
GewekeNP & 3.34$\pm$0.03 & 9.69$\pm$0.30 \\
\bottomrule
\end{tabular}
\end{table}

\subsection{Time-dependent mean reversion and endogenous spectral changes}

To assess whether non-stationary mean-reversion dynamics can alter the spectral properties of the process and potentially affect causal inference, we performed additional experiments in which the OU mean-reversion parameter was made explicitly time-dependent. We considered two forms of temporal modulation: a sinusoidal variation and an exponentially decaying mean-reversion coefficient. Since these settings fall outside the strict theoretical assumptions of PLaCy, they provide robustness tests under endogenous spectral changes.

In these experiments, we used the post-causality setting, where the OU dynamics are first simulated without causal interactions and the causal effects are then added post hoc as lagged perturbations. For each variable $i=1,\dots,N$, the non-causal time-dependent OU process is first generated as
\[
x_i(t+\Delta t)
=
x_i(t)
-
\Delta t\,
a(t)
\frac{\operatorname{sign}(x_i(t))|x_i(t)|^{\gamma}}{\tau_c}
+
\sqrt{\Delta t}\,\eta_i(t),
\]
where $\gamma$ is the decay exponent, fixed to $\gamma=1$ in our experiments, $\tau_c$ is the characteristic relaxation time, and $\eta_i(t)$ is the noise term.

For the sinusoidal modulation experiment, we used
\[
a(t)=\sin(t),
\]
whereas for the exponentially decaying modulation experiment we used
\[
a(t)=\exp(-\lambda t),
\]
with decay rate $\lambda=0.1$. After simulating the non-causal trajectories, causal effects are added post hoc. For each directed edge $i \rightarrow j$, with $A_{ij}=1$, the target variable is updated as
\[
x_j(t)
\leftarrow
x_j(t)
+
c\,A_{ij}\,
\operatorname{sign}(x_i(t-\ell))
|x_i(t-\ell)|^{\beta},
\qquad t \geq \ell,
\]
where $c$ is the causal strength, $\ell$ is the causal lag, and $\beta$ is the causal exponent.

\begin{table}[h]
\centering
\caption{Results for sinusoidal time-dependent mean-reversion experiments with $N=5$ and $\sigma_g = 1.0$. Best values between Granger and PLaCy are shown in bold.}
\label{tab:sinusoidal_mean_reversion}
\begin{tabular}{lcccc}
\hline
\textbf{Dataset} 
& \textbf{F1 Granger} 
& \textbf{F1 PLaCy} 
& \textbf{TNR Granger} 
& \textbf{TNR PLaCy} \\
\hline
$\mathrm{OU}(\sigma_g^m = 0)$ 
& $\mathbf{0.54 \pm 0.09}$ 
& $0.49 \pm 0.11$ 
& $0.47 \pm 0.18$ 
& $\mathbf{0.53 \pm 0.22}$ \\

$\overline{\mathrm{OU}}(\sigma_g^m = 0)$ 
& $\mathbf{0.51 \pm 0.10}$ 
& $0.50 \pm 0.11$ 
& $0.44 \pm 0.15$ 
& $\mathbf{0.54 \pm 0.19}$ \\

$\mathrm{OU}(\sigma_g^m > 0)$ 
& $0.67 \pm 0.12$ 
& $\mathbf{0.79 \pm 0.12}$ 
& $0.73 \pm 0.15$ 
& $\mathbf{0.85 \pm 0.09}$ \\

$\overline{\mathrm{OU}}(\sigma_g^m > 0)$ 
& $0.68 \pm 0.13$ 
& $\mathbf{0.72 \pm 0.15}$ 
& $0.70 \pm 0.19$ 
& $\mathbf{0.81 \pm 0.13}$ \\
\hline
\end{tabular}
\end{table}

\begin{table}[h]
\centering
\caption{Results for exponentially decaying time-dependent mean-reversion experiments with $N=5$ and $\sigma_g = 1.0$. Best values between Granger and PLaCy are shown in bold.}
\label{tab:exponential_mean_reversion}
\begin{tabular}{lcccc}
\hline
\textbf{Dataset} 
& \textbf{F1 Granger} 
& \textbf{F1 PLaCy} 
& \textbf{TNR Granger} 
& \textbf{TNR PLaCy} \\
\hline
$\mathrm{OU}(\sigma_g^m = 0)$ 
& $\mathbf{0.56 \pm 0.09}$ 
& $0.54 \pm 0.11$ 
& $0.55 \pm 0.12$ 
& $\mathbf{0.69 \pm 0.13}$ \\

$\overline{\mathrm{OU}}(\sigma_g^m = 0)$ 
& $\mathbf{0.57 \pm 0.13}$ 
& $0.54 \pm 0.11$ 
& $0.55 \pm 0.17$ 
& $\mathbf{0.66 \pm 0.18}$ \\

$\mathrm{OU}(\sigma_g^m > 0)$ 
& $0.61 \pm 0.15$ 
& $\mathbf{0.77 \pm 0.11}$ 
& $0.68 \pm 0.16$ 
& $\mathbf{0.86 \pm 0.09}$ \\

$\overline{\mathrm{OU}}(\sigma_g^m > 0)$ 
& $0.61 \pm 0.16$ 
& $\mathbf{0.81 \pm 0.14}$ 
& $0.61 \pm 0.21$ 
& $\mathbf{0.86 \pm 0.11}$ \\
\hline
\end{tabular}
\end{table}

Overall, PLaCy remains stable under explicitly time-varying mean-reversion dynamics. In particular, it consistently improves TNR across all analyzed scenarios, while achieving comparable or superior F1 scores in the presence of modulated mean reversion. These results suggest that PLaCy does not systematically misinterpret endogenous temporal variation in the mean-reversion coefficient as causal structure. Instead, the method appears robust to local spectral changes induced by non-stationary dynamics, supporting its applicability beyond idealized stationary regimes.

\end{document}